\definecolor{mydarkred}{rgb}{0.6,0,0}
\definecolor{mydarkgreen}{rgb}{0,0.6,0}
\DeclareMathOperator*{\argmin}{\mathrm{arg\,min}}
\DeclareMathOperator*{\argmax}{\mathrm{arg\,max}}
\newtheorem{theorem}{Theorem}
\newtheorem{lemma}[theorem]{Lemma}
\begin{document}
    \title{Online Multiclass Classification Based on Prediction Margin for Partial Feedback}

\author{Takuo Kaneko \\The University of Tokyo\\RIKEN\\ \texttt{kaneko@ms.k.u-tokyo.ac.jp}\\
\vspace{-1mm}\\
Issei Sato \\The University of Tokyo\\RIKEN\\ \texttt{sato@k.u-tokyo.ac.jp} \\
\vspace{-1mm}\\
Masashi Sugiyama\\RIKEN\\The University of Tokyo\\\texttt{sugi@k.u-tokyo.ac.jp}
}

\maketitle

    \begin{abstract}
    We consider the problem of online multiclass classification with partial feedback, where an algorithm predicts a class for a new instance in each round and only receives its correctness. Although several methods have been developed for this problem, recent challenging real-world applications require further performance improvement. In this paper, we propose a novel online learning algorithm inspired by recent work on \emph{learning from complementary labels}, where a complementary label indicates a class to which an instance does \emph{not} belong. This allows us to handle partial feedback deterministically in a margin-based way, where the prediction margin has been recognized as a key to superior empirical performance. We provide a theoretical guarantee based on a cumulative loss bound and experimentally demonstrate that our method outperforms existing methods which are non-margin-based and stochastic.
\end{abstract}
    \section{Introduction}
Starting with the \emph{perceptron} \cite{Perceptron}, research on online classification has been extensively conducted \cite{Fobos, rda, Adagrad}. Methods that use the prediction margin, which indicates the difference between the score of a classifier and a classification boundary, such as \emph{passive-aggressive} (PA) \cite{PA}, \emph{confidence-weighted} (CW) \cite{CW} and their variants \cite{Arow, SCW}, have been shown to achieve better empirical performance. In addition, some of these prediction-margin based methods have theoretical guarantee based on mistake bounds in adversarial cases.

Some methods have been extended to online multiclass classification \cite{PA, Arow, SPA}. In this multiclass setting, the prediction margin is defined as the difference in scores between classes, and these methods update the classifier on the basis of it. The algorithms assume that an instance and its correct label are received in each round (which is called \emph{full feedback}), and the classifier is updated with them. However, there are many cases where it is easy to know whether the prediction was correct or not (which is called \emph{partial feedback}), but hard to obtain correct labels in all rounds.

There has been some research on online multiclass classification with partial feedback, e.g, Banditron \cite{Banditron}, Confidit \citep{Confidit}, exp\_grad \citep{Exp_grad}, Newtron \citep{Newtron}, the second order banditron algorithm (SOBA) \citep{SOBA}, bandit passive-aggressive (BPA) \citep{BPA} and confidence-weighted bandit learning (CWB) \citep{CWB}. However, their empirical performance is not well in practice. Banditron, Confidit, exp\_grad, Newtron and SOBA employ classical perceptron-based algorithms for update rules. BPA and CWB employ PA-based and CW-based algorithms, respectively, which are known as prediction-margin based algorithms. However, they handle multiclass problems in a one-versus-rest way and apply update rules to each classifier independently. Thus they are not based on the prediction margin in terms of multiclass classification. In addition, the previous research commonly uses some exploration strategies in the label space for training the classifier. Banditron, exp\_grad, Newtron, BPA and CWB conduct their explorations in the manner of an $\epsilon$-greedy method. Confidit conducts exploration on the basis of an upper confidence bound. 

In this paper, we propose a deterministic prediction-margin based algorithm for online multiclass classification with partial feedback. When the prediction is correct, we will use the update rule of support-class passive aggressive \cite{SPA}, which is a state of the art PA based method for online multiclass classification with full feedback. On the other hand, for the case where the proposed label is incorrect, we propose a new update rule, inspired by \emph{learning from complementary labels}\footnote{A complementary label indicates a class to which an instance does not belong.} \cite{complementary}. Our contributions in this paper can be summarized as follows:
\begin{itemize}
    \item We propose a deterministic prediction-margin based algorithm for online multiclass classification with partial feedback, by combining \emph{support-class passive aggressive} \cite{SPA} for correct prediction and \emph{learning from complementary labels} \cite{complementary} for incorrect prediction (Section \ref{proposed_section}).
    \item We theoretically show the convergence of the proposed method by deriving a cumulative square loss bound (Section \ref{analysis}).
    \item We experimentally demonstrate the superior performance of the proposed method compared with existing methods for partial feedback (Section \ref{experiments}).
\end{itemize}
    \section{Preliminary}
In this section, we formulate the problem of online multiclass classification with partial feedback.
\subsection{Problem setting}
In ordinary online multiclass classification setting, in each round $t$, the algorithm receives an instance ${\bf x}_t \in \mathbb{R}^d$ and predicts its label denoted by ${\hat y}_t \in \{1, \ldots, K\}$, where $d$ is the dimension of the feature vectors and $K$ is the number of classes. Then, the algorithm receives the correct label $y_t$ and improves the classifier if necessary. 

In contrast, in the partial feedback setting, the algorithm chooses a proposed label ${\tilde y}_t$ after making a prediction ${\hat y}_t$ and then asks an oracle whether ${\tilde y}_t$ is correct or not. The goal is to reduce the number of mistaken proposed labels 
\begin{equation}
    \sum_{t=1}^T \mathbbm{1}[{\tilde y}_t \not= y_t]
\end{equation}
as much as possible. 

\subsection{Model}
Our algorithm uses a linear-in-parameter model that is used by the existing online learning algorithms \citep{PA, CW, Arow, SCW, Perceptron}. We consider $K$ weight vectors ${\bf w}_i \in \mathbb{R}^d$, for $i = 1, \ldots, K$ and give a score ${\bf w}_i^{\top}{\bf x}$ for class $i$ of instance ${\bf x}$, where $\top$ denotes the transpose. We define a classifier $f: \mathbb{R}^d \rightarrow \{1, \ldots, K\}$ that predicts the label for ${\bf x} \in \mathbb{R}^d$ as follows:
\begin{equation}
f({\bf x}) = \argmax_{i=1,\ldots,K} {\bf w}_i^{\top}{\bf x}.
\end{equation}
We denote the parameters ${\bf w}_i$ at round $t$, as ${\bf w}_{i, t}$.

    \section{Proposed Method}
\label{proposed_section}

In this section, we introduce our proposed algorithm for online multiclass classification with partial feedback. Our algorithm is based on online passive-aggressive algorithms \citep{PA}, which are based on prediction-margin and perform well in online classification problems. 

In the $t$-th round, the algorithm receives an instance ${\bf x}_t \in \mathbb{R}^d$ such that $\|{\bf x}_t\| = R$ , where $R$ is a constant value. Then, it predicts its label ${\hat y}_t$ as follows:
\begin{equation}
{\hat y}_t = \argmax_{i \in \{1, \ldots, K\}} {\bf w}_{i, t}^{\top}{\bf x}_t.
\end{equation}
Regarding the proposed label ${\tilde y}_t$, our algorithm always behaves deterministically; that is, it always selects ${\hat y}_t$ as ${\tilde y}_t$, whereas the previous algorithms \citep{Banditron, Confidit, Exp_grad, Newtron, BPA, CWB, SOBA} may conduct exploration in several ways. Next, the algorithm receives $M_t = \mathbf{1}\{y_t = {\tilde y}_t \}$ where $\mathbf{1}\{\text{condition}\}=1$ if the condition is satisfied, and $0$ otherwise. The algorithm behaves differently in accordance with $M_t$.

\subsection{Update rule for the wrong proposed label}
When $M_t = \mathrm{False}$, i.e, ${\tilde y}_t$ is not the correct label, this label can be regarded as a \emph{complementary label} \citep{complementary}.
We propose an online algorithm for complementary labels. Here, we define the loss $\ell_t$ when ${\tilde y}_t$ is the \emph{complementary label} in round $t$ as follows:
\begin{equation}
\label{CPA_loss}
\ell_t = \min_{i \in \{1, \ldots, K\} \backslash \{{\tilde y}_t\}} 1 - {\bf w}_{i, t}^{\top}{\bf x}_t + {\bf w}_{{\tilde y}_t, t}^{\top}{\bf x}_t. 
\end{equation}
This loss corresponds to the minimum margin between the scores of the incorrect class ${\tilde y}_t$ and the other classes.
Here, since ${\tilde y}_t = \arg \max_{i \in \{1, \ldots, K\}} {\bf w}_{i, t}^{\top}{\bf x}_t$, the following is satisfied:
\begin{equation}
\label{CPA_l}
\ell_t \geq 1.
\end{equation}

For this case where the proposed label is wrong, we formulate the following optimization problem for round $t$ with a hyperparameter $\beta \in (0, 1]$:
\begin{equation}
\begin{split}
\label{cop}
& {\bf w}_{1, t+1}, \ldots, {\bf w}_{K, t+1}  \\
= & \argmin_{{\bf w}_1, \ldots, {\bf w}_K \in \mathbb{R}^d}  \sum_{i \in \{1, \ldots, K\}} \|{\bf w}_i - {\bf w}_{i, t}\|^2  \\
\mathrm{s.t.} & \min_{i \in \{1, \ldots, K\}} \left(1 - {\bf w}_{i}^{\top}{\bf x}_t + {\bf w}_{{\tilde y}_t}^{\top}{\bf x}_t \right) \leq (1-\beta)\ell_t,  \\
& \hspace{0.3cm} {\bf w}_{i}^{\top}{\bf x}_t - {\bf w}_{j}^{\top}{\bf x}_t = {\bf w}_{i, t}^{\top}{\bf x}_t - {\bf w}_{j, t}^{\top}{\bf x}_t, \,\, \forall i, j \in \{1, \ldots, K\} \backslash \{{\tilde y}_t\}.
\end{split}
\end{equation}
The algorithm knows the label ${\tilde y}_t$ is not correct. Therefore the weight vectors ${\bf w}_1, \ldots, {\bf w}_K$ are updated so that the score of ${\tilde y}_t$ is not the highest, that is, the prediction margin between the scores of ${\tilde y}_t$ and another class $i$, ${\bf w}_{{\tilde y}_t}^{\top}{\bf x}_t - {\bf w}_{i}^{\top}{\bf x}_t$ becomes smaller. 

In contrast, the algorithm does not know the true label. Therefore, the weight vectors are updated in such a way that the prediction margins between the scores of labels other than ${\tilde y}_t$, do not change. This corresponds to the second constraint in (\ref{cop}). As a result, the algorithm focuses on the margin between the score of ${\tilde y}_t$ and the second highest score,
\begin{equation}
\begin{split}
& {\bf w}_{{\tilde y}_t}^{\top}{\bf x}_t - \max_{i \in \{1, \ldots, K\} \backslash \{{\tilde y}_t\}} {\bf w}_{i}^{\top}{\bf x}_t = \min_{i \in \{1, \ldots, K\} \backslash \{{\tilde y}_t\}} \left(- {\bf w}_{i}^{\top}{\bf x}_t + {\bf w}_{{\tilde y}_t}^{\top}{\bf x}_t \right).
\end{split}
\end{equation}

In the following, we derive a closed update rule for the optimization problem (\ref{cop}).

First, the following lemma holds for the form of the update.
\begin{lemma}
\label{update_form}
The update rule for the optimization problem (\ref{cop}) is expressed for some $\tau_1, \ldots, \tau_K \in \mathbb{R}$ as:
\begin{equation}
\begin{split}
\label{lemma1}
{\bf w}_{i,t+1} = {\bf w}_{i,t} + \tau_i{\bf x}_t \,\, \forall i \in \{1, \ldots, K\}.
\end{split}
\end{equation}
\end{lemma}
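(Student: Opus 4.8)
The plan is to bypass the KKT machinery entirely and argue directly by an orthogonal-decomposition (projection) argument, exploiting the fact that both the objective and every constraint in (\ref{cop}) interact with each weight vector ${\bf w}_i$ \emph{only} through the scalar ${\bf w}_i^{\top}{\bf x}_t$.

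First I would reparametrize by writing ${\bf w}_i = {\bf w}_{i,t} + \Delta_i$, so that the objective becomes $\sum_{i} \|\Delta_i\|^2$ and each constraint depends on ${\bf w}_i$ only via the inner product ${\bf w}_i^{\top}{\bf x}_t = {\bf w}_{i,t}^{\top}{\bf x}_t + \Delta_i^{\top}{\bf x}_t$. Next I would decompose each displacement along and orthogonal to ${\bf x}_t$, namely $\Delta_i = \tau_i {\bf x}_t + {\bf v}_i$ with ${\bf v}_i^{\top}{\bf x}_t = 0$ and $\tau_i = \Delta_i^{\top}{\bf x}_t / R^2$ (using $\|{\bf x}_t\| = R$). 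The goal is then to show that at any optimum all the orthogonal parts ${\bf v}_i$ vanish.

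The key step is a perturbation argument. Let $({\bf w}_i^{*})_{i}$ be any optimal solution, with displacements $\Delta_i^{*} = \tau_i^{*}{\bf x}_t + {\bf v}_i^{*}$. Replacing each $\Delta_i^{*}$ by $\tau_i^{*}{\bf x}_t$ leaves every inner product $({\bf w}_{i,t} + \Delta_i^{*})^{\top}{\bf x}_t$ unchanged, since ${\bf v}_i^{*} \perp {\bf x}_t$; hence feasibility of both the first ($\min$) constraint and the equal-margin constraints is preserved, as each is a condition purely on the quantities $\{{\bf w}_i^{\top}{\bf x}_t\}$. On the other hand, the Pythagorean identity gives $\|\Delta_i^{*}\|^2 = \tau_i^{*2}R^2 + \|{\bf v}_i^{*}\|^2$, so the modified point has a strictly smaller objective unless every ${\bf v}_i^{*} = 0$. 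Optimality therefore forces ${\bf v}_i^{*} = 0$ for all $i$, yielding ${\bf w}_{i,t+1} = {\bf w}_{i,t} + \tau_i {\bf x}_t$, as claimed.

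The only delicate point worth flagging is that the first constraint is a $\min$, hence nonsmooth, and describes a union of half-spaces rather than a convex set, so one cannot simply invoke strong duality or a clean KKT stationarity condition. I expect this to be where a naive approach would stall. The projection argument above, however, sidesteps the issue completely: it never inspects which branch of the $\min$ is active, relying only on the observation that the feasible region is determined by the inner products $\{{\bf w}_i^{\top}{\bf x}_t\}$, which the proposed modification leaves invariant. Thus the nonconvexity of the feasible set is irrelevant for establishing the \emph{form} of the update, and the remaining task of pinning down the scalars $\tau_i$ can be deferred to the subsequent derivation.
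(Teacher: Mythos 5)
Your proposal is correct and is essentially the paper's own argument: the paper likewise decomposes ${\bf w}_i^{\star} - {\bf w}_{i,t} = \tau_i {\bf x}_t + {\bf z}_i$ with ${\bf x}_t^{\top}{\bf z}_i = 0$, observes that the constraints depend only on the inner products (hence only on the $\tau_i$), and concludes from the Pythagorean splitting of the objective that the orthogonal parts must vanish. Your version merely makes the perturbation step and the irrelevance of the $\min$-constraint's nonconvexity more explicit than the paper does.
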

\begin{proof}
Let the optimal solution of (\ref{cop}) be ${\bf w}_i^{\star}$. Then ${\bf w}_i^{\star} - {\bf w}_{i, t}$ can be expressed for some ${\bf z}_i$ such that ${\bf x}_t^{\top} {\bf z}_i = 0$ as follows
\begin{equation}
\begin{split}
{\bf w}_i^{\star} - {\bf w}_{i, t} = \tau_i{\bf x}_t + {\bf z}_i.
\end{split}
\end{equation}
Substituting this equality for the optimization problem (\ref{cop}), we obtain
\begin{equation}
\begin{split}
\label{cop_1}
& \argmin_{{\bf z}_1, \ldots, {\bf z}_K \in \mathbb{R}^d, \tau_1, \ldots, \tau_K \in \mathbb{R}} \sum_{i \in \{1, \ldots, K\}} \tau_i^2\|{\bf x}_t\|^2 + \|{\bf z}_i\|^2  \\
\mathrm{s.t.} & \min_{i \in \{1, \ldots, K\}} (1 - {\bf w}_{i, t}^{\top}{\bf x}_t + {\bf w}_{{\tilde y}_t, t}^{\top}{\bf x}_t) - \tau_i\|{\bf x}_t\|^2 + \tau_{{\tilde y}_t}\|{\bf x}_t\|^2 \leq (1-\beta)\ell_t,  \\
& \tau_i = \tau_j,  \,\, \forall i, j \in \{1, \ldots, K\} \backslash \{{\tilde y}_t\}.
\end{split}
\end{equation}
The objective function is minimized by $\|z_i\|^2=0$, for all $i$ and the lemma is proven.
\end{proof}

By Lemma \ref{update_form}, (\ref{cop}) can be rewritten as the following optimization problem:
\begin{equation}
\begin{split}
\label{op1}
& \argmin_{\tau_1, \ldots, \tau_K \in \mathbb{R}} \frac{1}{2} \sum_{i \in \{1, \ldots, K\}} \tau_i\|{\bf x}_t\|^2  \\
\mathrm{s.t.} & \min_{i \in \{1, \ldots, K\} \backslash \{{\tilde y}_t\}} (1 - {\bf w}_{i, t}^{\top}{\bf x}_t + {\bf w}_{{\tilde y}_t, t}^{\top}{\bf x}_t) - (\tau_i - \tau_{\tilde{y}_t})\|{\bf x}_t\|^2 \leq (1-\beta)\ell_t, \\
& \tau_i = \tau_j \,\, \forall i, j \in \{1, \ldots, K\} \backslash \{{\tilde y}_t\}.
\end{split}
\end{equation}
From the second constraint of (\ref{op1}), for $i \in \{1, \ldots, K\} \backslash \{{\tilde y}_t\}$, we can denote $\tau_i = \tau$ for some $\tau \in \mathbb{R}$. Consequently, the optimization problem to be solved is
\begin{equation}
\begin{split}
\label{op2}
\argmin_{\tau, \tau_{{\tilde y}_t} \in \mathbb{R}} & \frac{1}{2}(K-1)\tau^2 + \frac{1}{2}\tau_{{\tilde y}_t}^2 \\
\mathrm{s.t.} \,\, & \beta\ell_t - (\tau - \tau_{{\tilde y}_t})\|{\bf x}_t\|^2 \leq 0.
\end{split}
\end{equation}
Note that $\ell_t$ is defined as (\ref{CPA_loss}).

If $\ell_t = 0$, then $\tau = \tau_{{\tilde y}_t} = 0$ satisfies the constraint in (\ref{op2}) and is the optimal solution. Therefore, we concentrate on the case $\ell_t > 0$. Here, we introduce a Lagrange multiplier $\lambda \geq 0$ and define the Lagrangian function of (\ref{op2}) as follows:
\begin{equation}
\label{lagrange}
L(\tau, \tau_{{\tilde y}_t}, \lambda) = \frac{1}{2}(K-1)\tau^2 + \frac{1}{2}\tau_{{\tilde y}_t}^2 + \lambda(\beta\ell_t - (\tau - \tau_{{\tilde y}_t})\|{\bf x}_t\|^2).
\end{equation}
Since its derivative with respect to $\tau$ is zero for an optimal solution of (\ref{op2}), we have
\begin{equation}
\begin{split}
&\frac{\partial L(\tau, \tau_{{\tilde y}_t}, \lambda)}{\partial \tau} = (K-1)\tau - \lambda\|{\bf x}_t\|^2 = 0 \\
& \frac{\partial L(\tau, \tau_{{\tilde y}_t}, \lambda)}{\partial \tau_{{\tilde y}_t}} = \tau_{{\tilde y}_t} + \lambda\|{\bf x}_t\|^2 = 0,
\end{split}
\end{equation}
and obtain
\begin{equation}
\begin{split}
\label{tau}
& \tau = \frac{\lambda\|{\bf x}_t\|^2}{K-1}, \\
& \tau_{{\tilde y}_t} = -\lambda\|{\bf x}_t\|^2.
\end{split}
\end{equation}
Substituting (\ref{tau}) for (\ref{lagrange}) yields
\begin{equation}
\label{lagrange2}
L(\lambda) = \frac{K\|{\bf x}_t\|^4}{2(K-1)}\lambda^2 - \beta\ell_t\lambda.
\end{equation}
Then, taking the derivative of (\ref{lagrange2}) with respect to $\lambda$ and setting it to zero, we obtain
\begin{equation}
\label{lambda}
\lambda = \frac{1}{\|{\bf x}_t\|^4}\frac{K-1}{K}\ell_t.
\end{equation}
Substituting (\ref{lambda}) for (\ref{tau}) yields
\begin{equation}
\begin{split}
& \tau = \frac{1}{K}\frac{\beta\ell_t}{\|{\bf x}_t\|^2}, \\
& \tau_{{\tilde y}_t} = -\frac{K-1}{K}\frac{\beta\ell_t}{\|{\bf x}_t\|^2}.
\end{split}
\end{equation}

Finally, we obtain the following update rule:
\begin{equation}
\begin{split}
\label{update}
{\bf w}_{i, t+1} = 
\begin{cases}
{\bf w}_{i, t} + \frac{1}{K}\frac{\beta\ell_t}{\|{\bf x}_t\|^2}{\bf x}_t \,\, & (i \in \{1, \ldots, K\} \backslash \{{\tilde y}_t\}), \\
{\bf w}_{{\tilde y}_t, t} - \frac{K-1}{K}\frac{\beta\ell_t}{\|{\bf x}_t\|^2}{\bf x}_t & (i = {\tilde y}_t).
\end{cases}
\end{split}
\end{equation}

We discuss the choice of $\beta$ in Section \ref{analysis}. Intuitively, $\beta$ plays a role in adjusting the step-size. The closer $\beta$ is to $1$, the more aggressive the update is. On the other hand, the closer it is to $0$, the more passive the update is.

\subsection{Update rule for the correct proposed label}
When $M_t = \mathrm{True}$, i.e., the proposed label ${\tilde y}_t$ is the correct label, this round is regarded as an ordinary situation and we can use an existing online learning algorithm, the \emph{support-class passive aggressive} (SPA) algorithm \citep{SPA}. We briefly review the SPA algorithm below. 

First, the loss for the class $i \in \{1, \ldots, K\} \backslash \{{\tilde y}_t\}$ at round $t$ is defined as
\begin{equation}
\label{SPA_loss2}
\ell_{i, t} = \max(1 + {\bf w}_{i, t}^{\top}{\bf x}_t - {\bf w}_{{\tilde y}_t, t}^{\top}{\bf x}_t, 0) \,\, (i \in \{1, \ldots, K\} \backslash \{{\tilde y}_t\}),
\end{equation}
and the loss at round $t$ is defined as
\begin{equation}
\label{SPA_loss}
\ell_{t} = \max_{i \in \{1, \ldots, K\} \backslash \{{\tilde y}_t\}} \ell_{i,t} \,\, (i \in \{1, \ldots, K\} \backslash \{{\tilde y}_t\}).
\end{equation}
The loss $\ell_t$ corresponds to the margin between the scores of the correct class ${\tilde y}_t$ and all other classes.
Here, because ${\tilde y}_t = \argmax_{i \in \{1, \ldots, K\}} {\bf w}_{i, t}^{\top}{\bf x}_t$, the following is satisfied:
\begin{equation}
\label{SPA_l}
0 \leq \ell_t \leq 1.
\end{equation}

Let $\sigma(k)$ be the $k$-th class when $\ell_{i, t}$ is sorted in descending order. Then, the support class $S_t$, which is the set of classes whose parameters are updated, is determined as follows:
\begin{equation}
\label{support_class}
S_t = \left\{\sigma(k) \middle| \,\sum_{j=1}^{k-1}\ell_{\sigma(j), t} < k\ell_{\sigma(k), t} \right\}.
\end{equation}

The update rule of SPA is expressed on the basis of $S_t$ and $\ell_{i, t}$ defined above:
\begin{equation}
\begin{split}
{\bf w}_{{\tilde y}_t, t+1} = 
\begin{cases}
{\bf w}_{i, t} - \frac{1}{|S_t|+1}\left(\sum_{j \in S_t} \ell_{j, t}\right) {\bf x}_t \,\, & (i = {\tilde y}_t), \\
{\bf w}_{i, t} + \left(\ell_{i, t} - \sum_{j \in S_t} \frac{\ell_{j, t}}{|S_t|+1} \right) {\bf x}_t \,\, & (i \in S_t), \\
{\bf w}_{i, t} \,\, & (i \notin S_t).
\end{cases}
\end{split}
\end{equation}

Please refer to \citet{SPA} for the derivation of the update rules, etc.

The whole algorithm is shown in Algorithm \ref{algorithm}. Since our algorithm extends the SPA algorithm to the complementary label case, we call it complementary SPA (CSPA).

\begin{algorithm}[t]
\caption{CSPA algorithm with partial feedback}
\label{algorithm}
\begin{algorithmic}
\REQUIRE $\beta \in (0, 1]$.
\ENSURE ${\bf w}_{i, 1} \leftarrow {\bf 0} \in \mathbb{R}^d \,\, \forall i \in \{1, \ldots, K\}$.
\FOR{$t=1,2, \ldots, T$}
\STATE Receive an instance ${\bf x}_t \in \mathbb{R}^d$ : $\|{\bf x}_t\| = R$.
\STATE Predict label \, ${\hat y}_t = \argmax_{i \in \{1,\ldots,K\}} {\bf w}_{i, t}^{\top}{\bf x}_t$.
\STATE Set the \emph{proposed} label ${\tilde y}_t \leftarrow {\hat y}_t$.
\STATE Get the feedback $M_t = \{y_t = {\tilde y}_t\}$.
\IF {$M_t = \mathrm{False}$}
\STATE Calculate the loss \\
\hspace{0.5cm} $\ell_t = \min_{i \in \{1, \ldots, K\}} 1 - {\bf w}_{i, t}^{\top}{\bf x}_t + {\bf w}_{{\tilde y}_t, t}^{\top}{\bf x}_t$. 
\STATE Update \\
\hspace{0.5cm} ${\bf w}_{i, t+1} = {\bf w}_{i, t} + \frac{1}{K}\frac{\beta\ell_t}{\|{\bf x}_t\|^2}{\bf x}_t$ \\
\hspace{3cm} $(i \in \{1, \ldots, K\} \backslash \{{\tilde y}_t\})$.
\STATE Update \, ${\bf w}_{\tilde{y}, t+1} = {\bf w}_{{\tilde y}, t} - \frac{K-1}{K}\frac{\beta\ell_t}{\|{\bf x}_t\|^2}{\bf x}_t$.
\ELSE
\STATE Calculate the loss $\ell_{i, t}$ and $\ell_{t}$ according to (\ref{SPA_loss2}) and (\ref{SPA_loss}).
\STATE $S_t = \emptyset$.
\WHILE {$\sum_{j = 1}^{|S_t|} \frac{\ell_{\sigma(j), t}}{|S_t|+1} < \ell_{\sigma(|S_t|)}$}
\STATE $S_t = S_t \cup \{\sigma(|S_t|)\}$.
\ENDWHILE
\STATE Update \, ${\bf w}_{{\tilde y}, t+1} = {\bf w}_{{\tilde y}, t} - \frac{1}{|S_t|+1}\left(\sum_{j \in S_t} \ell_{j, t}\right) {\bf x}_t$.
\STATE Update \\ 
\hspace{0.5cm} ${\bf w}_{i, t+1} = {\bf w}_{i, t} + \left(\ell_{i, t} - \sum_{j \in S_t} \frac{\ell_{j, t}}{|S_t|+1} \right) {\bf x}_t$ \\ 
\hspace{5.7cm} $(i \in S_t)$.
\STATE Update \, ${\bf w}_{i, t+1} = {\bf w}_{i, t} \,\, (i \notin S_t)$.
\ENDIF
\ENDFOR
\end{algorithmic}
\end{algorithm}

    \section{Theoretical Analysis}
\label{analysis}

In this section, we derive a cumulative square loss bound for CSPA. Theoretical analyses of online prediction-margin based algorithms have been considered on the basis of bounds on the number of mistakes \citep{Arow} or cumulative square loss \citep{PA, SPA}. We follow the analysis presented in \citet{PA} and \citet{SPA} for deriving the cumulative square loss of the CSPA algorithm in the partial feedback setting.

In the CSPA algorithm, the proposed label depends on the classification function in each round. Therefore, we consider an adversarial case where there is no assumption about the distribution of the sequence of data, similar to what is done in \citet{Banditron, Newtron}, so that CSPA can cope with this situation. 

Recall the definition of the loss function $\ell_t$ is defined in Section \ref{proposed_section} by (\ref{CPA_loss}) and (\ref{SPA_loss}) as follows:
\begin{equation}
\begin{split}
\ell_t = \begin{cases}
\max_{i \in \{1, \ldots, K\} \backslash \{{\tilde y}_t\}} \max(1 + {\bf w}_{i, t}^{\top}{\bf x}_t - {\bf w}_{{\tilde y}_t, t}^{\top}{\bf x}_t, 0) \,\,& 
\\ \hspace{5.6cm} ({\tilde y}_t = y_t),\\
\min_{i \in \{1, \ldots, K\} \backslash \{{\tilde y}_t\}} \, 1 - {\bf w}_{i, t}^{\top}{\bf x}_t + {\bf w}_{{\tilde y}_t, t}^{\top}{\bf x}_t \,\, & 
\\ \hspace{5.6cm} ({\tilde y}_t \not= y_t). 
\end{cases}
\end{split}
\end{equation}

We have the following bound on the cumulative square loss.
\begin{theorem}
\label{main_theorem}
Let $({\bf x}_1, y_1), \ldots, ({\bf x}_T, y_T)$ be a sequence  where $y_t \in \{1, \ldots, K\}$ is the correct label of ${\bf x}_t \in \mathbb{R}^d$ such that $\|{\bf x}_t\| = R$ for all $t$. Let ${\bf u}_1, \ldots, {\bf u}_K$ be vectors satisfying the following conditions for all $t$:
\begin{equation}
\label{assumption}
\max_{y' \in \{1, \ldots, K\} \backslash \{y_t\}} \sum_{y'' \not= y_t, y'} ({\bf u}_{y''}^{\top}{\bf x}_t - {\bf u}_{y'}^{\top}{\bf x}_t) \leq \alpha \,\, (0 \leq \exists \alpha < 1),
\end{equation}
and define the loss $\ell_t^{\star}$ as follows:
\begin{equation}
\ell_t^{\star} = \max_{i \in \{1, \ldots, K\} \backslash \{y_t\}} \max(1 + {\bf u}_{i, t}^{\top}{\bf x}_t - {\bf u}_{y_t, t}^{\top}{\bf x}_t, 0).
\end{equation}
When $\beta$ is set to be
\begin{equation}
\label{beta_assumption}
\beta = \frac{1-\alpha}{K-1},
\end{equation}
the cumulative square loss $\ell_t$ of CSPA on this sequence is bounded from above as follows:
\begin{equation}
\begin{split}
\sum_{t=1}^{T} \ell_t^2 \leq & \Biggl(\frac{K(K-1)}{(1-\alpha)^2}\sqrt{\sum_{i=1}^{T} (\ell_t^{\star})^2} + \frac{R\sqrt{K(K-1)}}{1-\alpha}\sqrt{\sum_{i \in \{1, \ldots, K\}}\|{\bf u}_i\|^2} \Biggr) ^2.
\end{split}
\end{equation}
\end{theorem}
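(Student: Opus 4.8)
The plan is to run the standard passive--aggressive potential argument against the comparator $\{\mathbf{u}_i\}$, using $\Phi_t=\sum_{i=1}^K\|\mathbf{w}_{i,t}-\mathbf{u}_i\|^2$. The first thing I would record is that \emph{both} update rules are of the rank-one form $\mathbf{w}_{i,t+1}=\mathbf{w}_{i,t}+\tau_i\mathbf{x}_t$ (Lemma \ref{update_form} for the wrong-label case; the SPA rule for the correct-label case), and that in both cases $\sum_{i=1}^K\tau_i=0$. Expanding the potential then gives the exact per-round identity $\Phi_t-\Phi_{t+1}=-2\sum_i\tau_i\mathbf{w}_{i,t}^\top\mathbf{x}_t+2\sum_i\tau_i\mathbf{u}_i^\top\mathbf{x}_t-\|\mathbf{x}_t\|^2\sum_i\tau_i^2$, and summing over $t$ telescopes the left-hand side to $\Phi_1-\Phi_{T+1}\le\Phi_1=\sum_i\|\mathbf{u}_i\|^2$, since $\mathbf{w}_{i,1}=\mathbf{0}$. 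Everything thus reduces to proving one per-round progress inequality.

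The target inequality, reverse-engineered from the constants in the statement, is $\Phi_t-\Phi_{t+1}\ge\frac{(1-\alpha)^2}{K(K-1)R^2}\ell_t^2-\frac{1}{R^2}\ell_t\ell_t^\star$, and I would establish it first in the wrong-label case. With $\beta=(1-\alpha)/(K-1)$ and $\|\mathbf{x}_t\|=R$, the explicit multipliers give $\|\mathbf{x}_t\|^2\sum_i\tau_i^2=\frac{(1-\alpha)^2}{K(K-1)R^2}\ell_t^2$ exactly, so the quadratic penalty already matches the leading coefficient. The algorithm-side term $-2\sum_i\tau_i\mathbf{w}_{i,t}^\top\mathbf{x}_t=\frac{2\beta\ell_t}{KR^2}\sum_{i\ne\tilde y_t}(\mathbf{w}_{\tilde y_t,t}^\top\mathbf{x}_t-\mathbf{w}_{i,t}^\top\mathbf{x}_t)$ is controlled from below using $\ell_t\ge1$ and the fact that each summand is at least $\ell_t-1$. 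The crux is the comparator term: using $\sum_i\tau_i=0$ I would rewrite $\sum_i\tau_i\mathbf{u}_i^\top\mathbf{x}_t=-\sum_{i\ne y_t}\tau_i\mu_i$ with $\mu_i=(\mathbf{u}_{y_t}-\mathbf{u}_i)^\top\mathbf{x}_t$. The definition of $\ell_t^\star$ gives $\mu_i\ge1-\ell_t^\star$ for every $i\ne y_t$ (in particular for $\tilde y_t$), while assumption (\ref{assumption}), rewritten as $(K-1)\mu_{y'}-\sum_{i\ne y_t}\mu_i\le\alpha$ and summed over the $K-2$ labels $y'\ne y_t,\tilde y_t$, yields the one-sided spread bound $\sum_{i\ne y_t,\tilde y_t}\mu_i\le(K-2)(\mu_{\tilde y_t}+\alpha)$. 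Together these turn the comparator term into the $-\frac{1}{R^2}\ell_t\ell_t^\star$ contribution plus lower-order linear terms, which the surplus in the $\ell_t^2$ coefficient (and $\ell_t\ge1$) must absorb.

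For the correct-label case I would invoke the SPA analysis of \citet{SPA}: its update is again rank-one with $\sum_i\tau_i=0$ and $0\le\ell_t\le1$, and the support-class set is the exact minimizer of the same projection, so the per-round decrease admits a PA-type lower bound of the form $c_1\ell_t^2-c_2\ell_t\ell_t^\star$; assumption (\ref{assumption}) again supplies the control on the comparator's off-class scores needed to match $c_1=\frac{(1-\alpha)^2}{K(K-1)R^2}$ and $c_2=\frac{1}{R^2}$. Once the single per-round inequality holds in both cases, the endgame is routine: summing and telescoping gives $c_1\sum_t\ell_t^2-c_2\sum_t\ell_t\ell_t^\star\le\sum_i\|\mathbf{u}_i\|^2$, Cauchy--Schwarz gives $\sum_t\ell_t\ell_t^\star\le\sqrt{\sum_t\ell_t^2}\sqrt{\sum_t(\ell_t^\star)^2}$, and writing $L=\sqrt{\sum_t\ell_t^2}$ reduces the claim to the scalar quadratic inequality $c_1L^2-c_2L\sqrt{\sum_t(\ell_t^\star)^2}-\sum_i\|\mathbf{u}_i\|^2\le0$; solving for $L$ and using $\sqrt{a+b}\le\sqrt a+\sqrt b$ produces exactly the stated bound, with $c_2/c_1=K(K-1)/(1-\alpha)^2$ and $1/\sqrt{c_1}=R\sqrt{K(K-1)}/(1-\alpha)$.

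I expect the main obstacle to be forcing a \emph{single} per-round inequality with identical constants out of the two structurally different updates --- in particular, squeezing the wrong-label comparator term into the $-\frac{1}{R^2}\ell_t\ell_t^\star$ form, since assumption (\ref{assumption}) only bounds the comparator margins one-sidedly and the leftover linear-in-$\ell_t$ terms must be absorbed using $\ell_t\ge1$ together with the slack left in the $\ell_t^2$ coefficient. Verifying that the SPA decrease satisfies the \emph{same} pair $(c_1,c_2)$ is the other delicate point; the final quadratic manipulation is mechanical by comparison.
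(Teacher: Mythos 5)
Your overall strategy coincides with the paper's: the same potential $\sum_i\|{\bf w}_{i,t}-{\bf u}_i\|^2$, the same telescoping upper bound $\sum_i\|{\bf u}_i\|^2$, an explicit expansion of the per-round decrease for the wrong-label update, the imported SPA bound for the correct-label update, and the same Cauchy--Schwarz/quadratic-formula endgame. One structural remark: the correct-label case is easier than you fear. The paper does not touch assumption (\ref{assumption}) there; it takes the SPA bound $\Delta_t\ge\bigl(\tfrac{|S_t|+3}{4|S_t|+4}\ell_t-\ell_t^{\star}\bigr)\tfrac{\ell_t}{R^2}$ as is, and only at the end forms $c_1$ as a minimum and $c_2$ as a maximum over the two cases, checking $\tfrac{(1-\alpha)^2}{K(K-1)}\le\tfrac{K+3}{4(K+1)}$ and $\tfrac{2\beta}{K}\le 1$. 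You do not need one per-round inequality with identical constants; two inequalities plus a min/max suffice.

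The genuine gap is in your handling of the comparator term in the wrong-label case. You propose to sum the constraint $(K-1)\mu_{y'}-\sum_{i\ne y_t}\mu_i\le\alpha$ over the $K-2$ labels $y'\ne y_t,\tilde y_t$, obtaining $\sum_{i\ne y_t,\tilde y_t}\mu_i\le(K-2)(\mu_{\tilde y_t}+\alpha)$. Propagating this leaves $-(K-2)\alpha$ inside the bracket where the paper has $-\alpha$: you arrive at $\Delta_t\ge\tfrac{\beta}{KR^2}\{2-2(K-2)\alpha-(K-1)\beta\}\ell_t^2-\tfrac{2\beta}{KR^2}\ell_t\ell_t^{\star}$, whose leading coefficient at $\beta=\tfrac{1-\alpha}{K-1}$ equals $\tfrac{(1-\alpha)(1-(2K-5)\alpha)}{K(K-1)R^2}$. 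This is strictly smaller than the required $\tfrac{(1-\alpha)^2}{K(K-1)R^2}$ whenever $K>3$ and $\alpha>0$, and it becomes non-positive once $(2K-5)\alpha\ge1$, at which point the argument collapses; so your route does not prove the theorem for general $\alpha<1$. The paper instead applies (\ref{assumption}) a single time, with $y'=\tilde y_t$, to bound $\sum_{i\ne y_t,\tilde y_t}({\bf u}_{\tilde y_t}-{\bf u}_i)^{\top}{\bf x}_t$ by $\alpha$ directly. (Caveat: with the sign exactly as printed, (\ref{assumption}) bounds the opposite-signed sum, so the paper is implicitly reading the condition as two-sided; your summed version is what the literal one-sided statement yields, which is precisely why it is lossy by a factor of $K-2$.) To recover the stated constants you must use the one-shot bound, not the averaged one.
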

The proof is in Appendix A in the supplementary material. 

This theorem indicates that for any ${\bf u}_1, \ldots, {\bf u}_K$ satisfying (\ref{assumption}), the square loss of CSPA can be bounded by the loss of ${\bf u}_1, \ldots, {\bf u}_K$. This bound is the same order as the \emph{online passive-aggressive algorithm} and the \emph{support class passive-aggressive algorithm} \citep{PA, SPA}, with respect to $T$.

In fact, the derived bound of the cumulative square loss of $\ell_t$ upper bounds the mistake bounds as follows:
\begin{equation}
\begin{split}
    \sum_{t=1}^T \ell_t^2 & \geq \sum_{t=1}^T \ell_t^2 \mathbbm{1}[\ell_t \geq 1] \\
    & \geq \sum_{t=1}^T \mathbbm{1}[\ell_t \geq 1] \\
    & = \sum_{t=1}^T \mathbbm{1}[\tilde{y}_t \not= y_t].
\end{split}
\end{equation}
Therefore, it means that the derived bound can also bound the number of mistakes.

The constraint (\ref{assumption}) requires that the differences between the scores of classes other than the correct label $y_t$, are relatively small. This may seem to be a strong constraint but ${\bf u}_1, \ldots, {\bf u}_K$ satisfying (\ref{assumption}) always exists because the LHS of (\ref{assumption}) goes to zero for ${\bf u}_1, \ldots, {\bf u}_K$ sufficiently close to zero vectors.

In terms of the regret for the adversarial cases, our derived bound does not assure the superiority to the existing methods in \citet{Banditron, Confidit, Exp_grad, Newtron}. Nevertheless, this gives a theoretical guarantee to a prediction-margin based algorithm for the partial feedback setting for the first time.


    \section{Experiments}
\label{experiment}
\begin{figure}[t]
\centering
\includegraphics[keepaspectratio, scale=0.25]{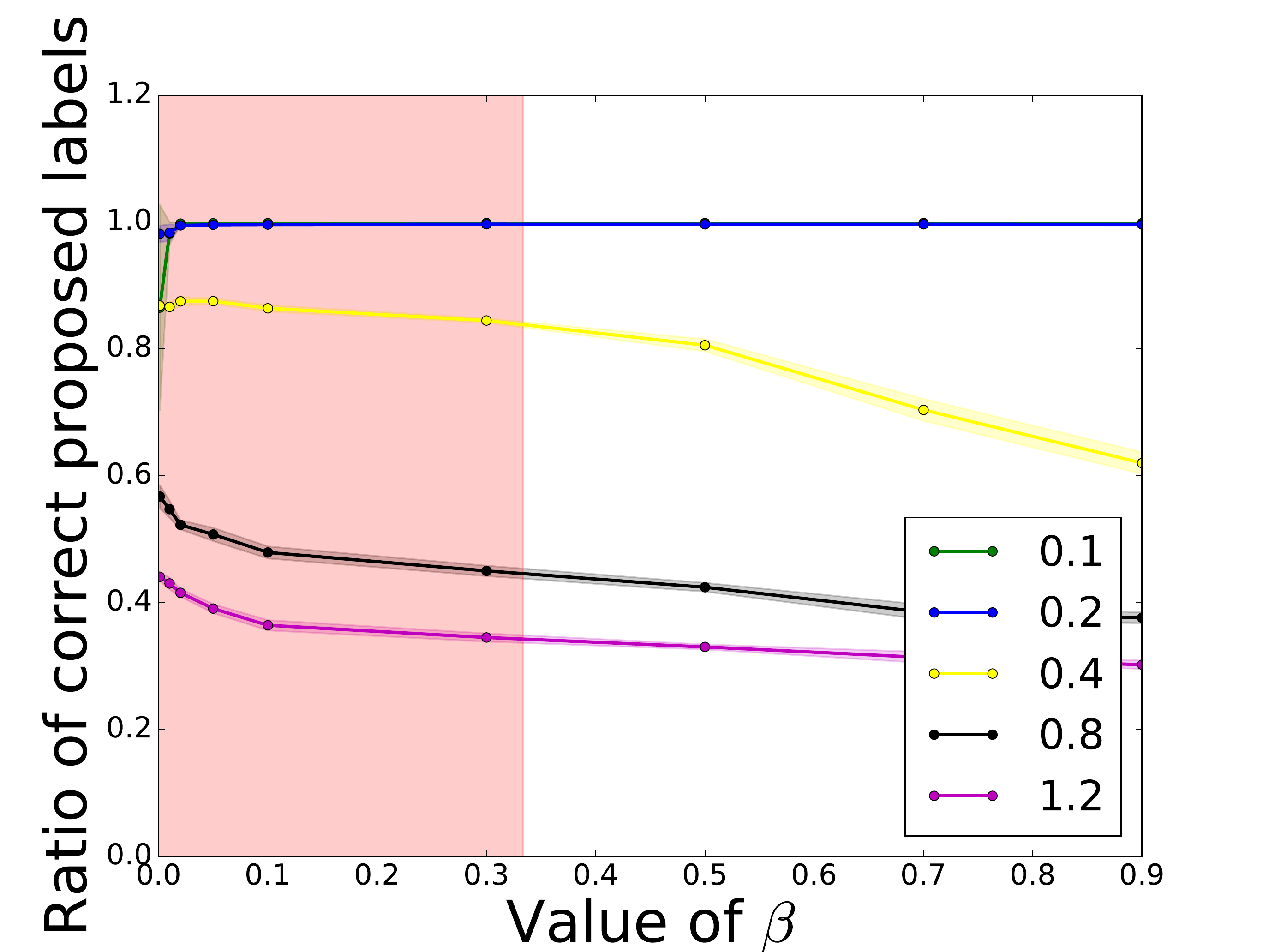}
\caption{Relevance between amount of noise $\sigma$ and $\beta$ on artificial dataset. Each legend corresponds to the amount of noise. Solid lines show the mean accuracy of ten trials, and the shaded areas around each plot show the standard deviation. The red shaded area shows the range of $\beta$ that can guarantee the convergence in Theorem 2.}
\label{figure0}
\end{figure}

\begin{figure*}[!t]
\begin{tabular}{ccc}
 \begin{minipage}[t]{0.3\textwidth}
  \centering
   \includegraphics[keepaspectratio, scale=0.17]{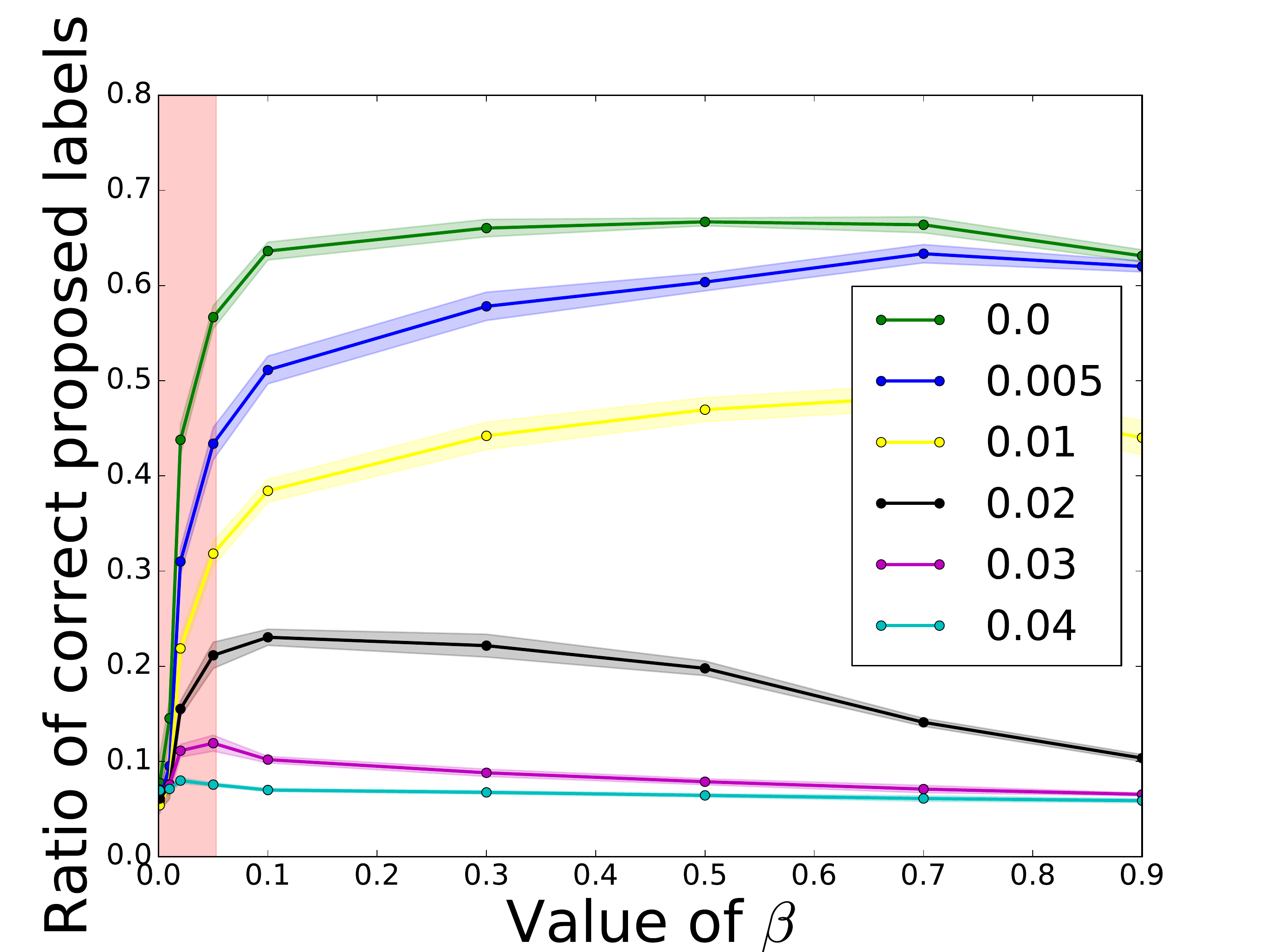} \\
   \subcaption{20News}
 \end{minipage} &
 \begin{minipage}[t]{0.3\textwidth}
  \centering
   \includegraphics[keepaspectratio, scale=0.17]{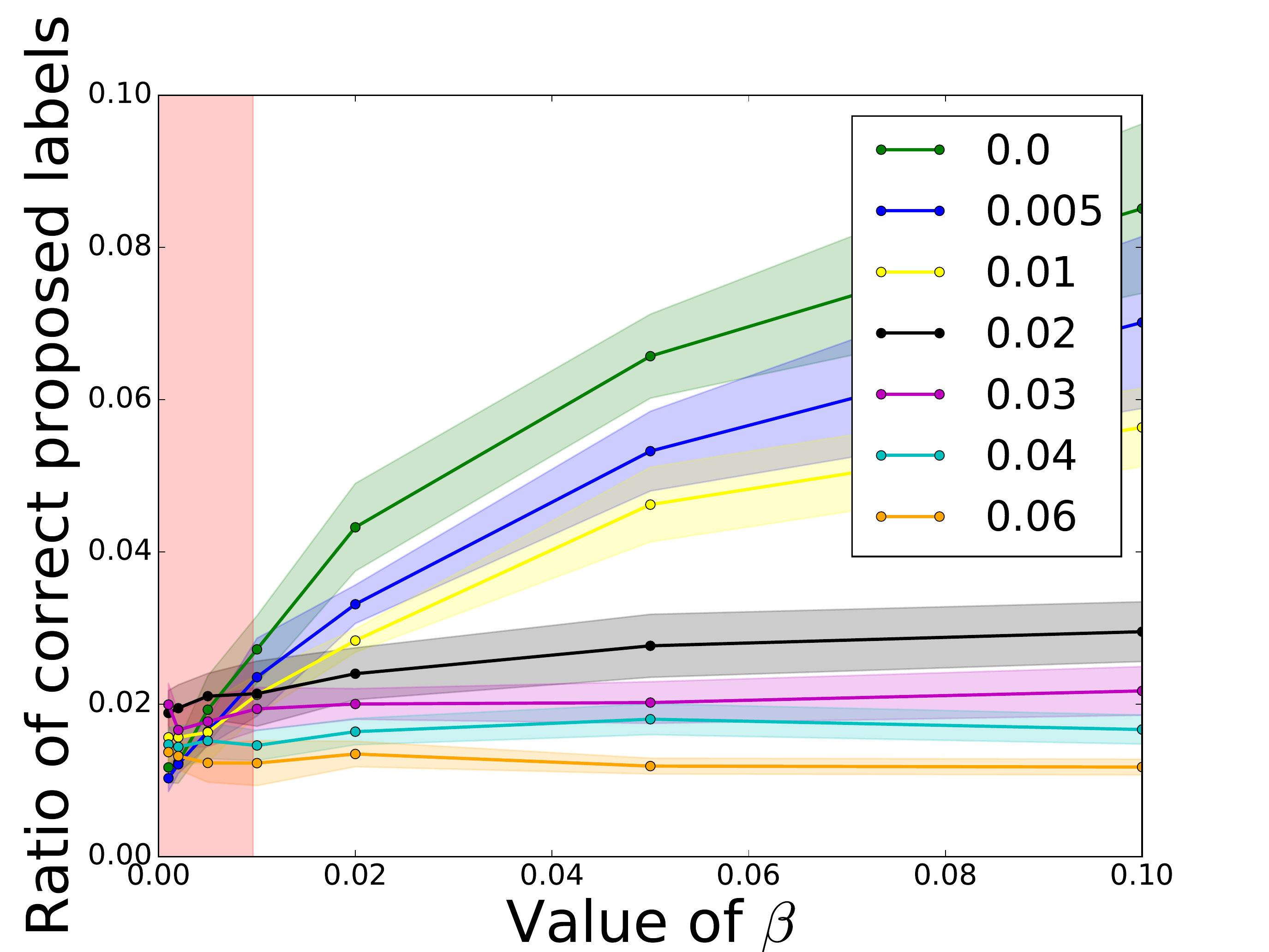} \\
   \subcaption{Sector}
 \end{minipage} & 
 \begin{minipage}[t]{0.3\textwidth}
  \centering
   \includegraphics[keepaspectratio, scale=0.17]{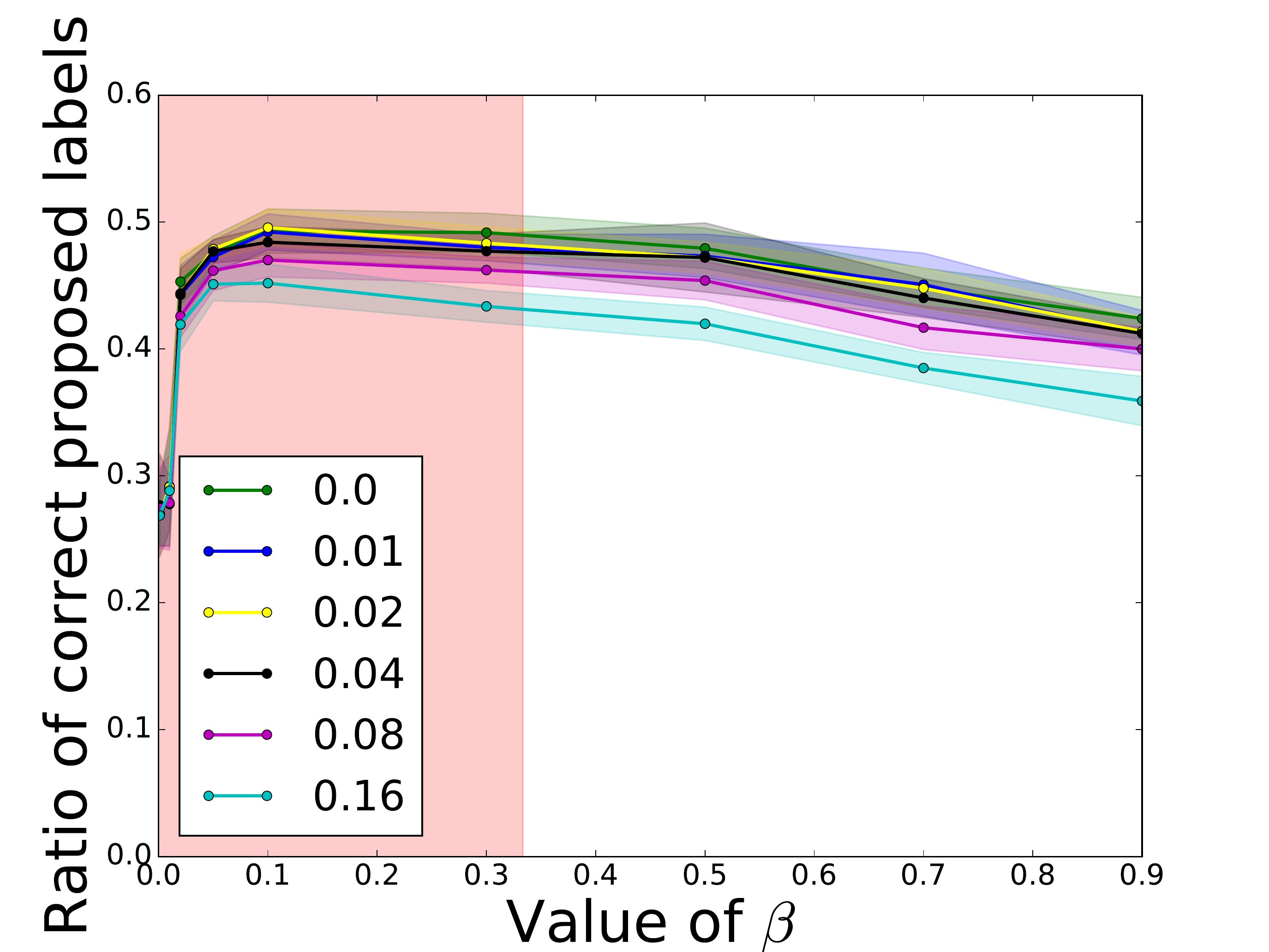}\\
   \subcaption{Vehicle}
 \end{minipage} \\
 \begin{minipage}[t]{0.3\textwidth}
  \centering
   \includegraphics[keepaspectratio, scale=0.17]{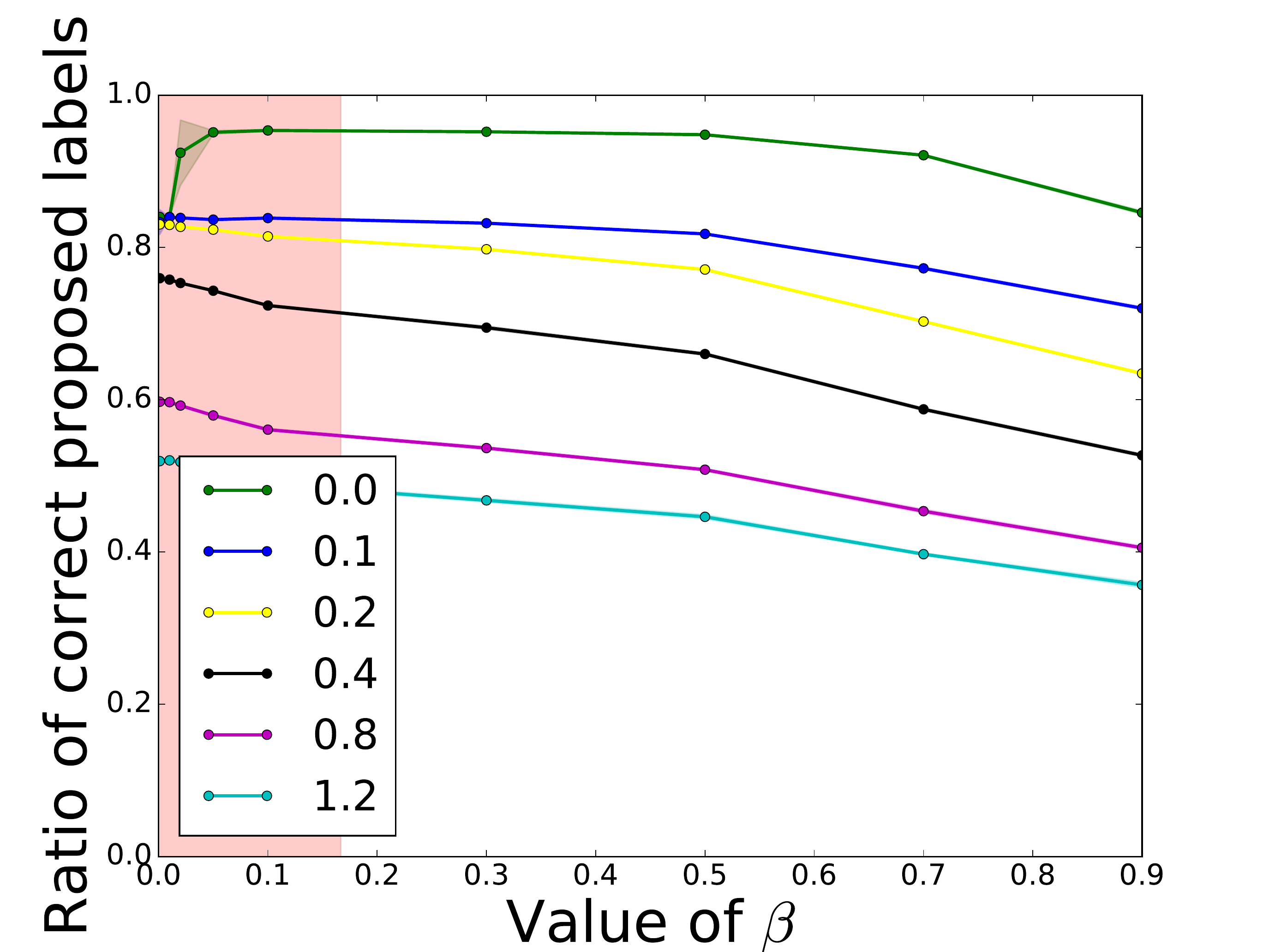} \\
   \subcaption{Shuttle}
 \end{minipage} &
 \begin{minipage}[t]{0.3\textwidth}
  \centering
   \includegraphics[keepaspectratio, scale=0.17]{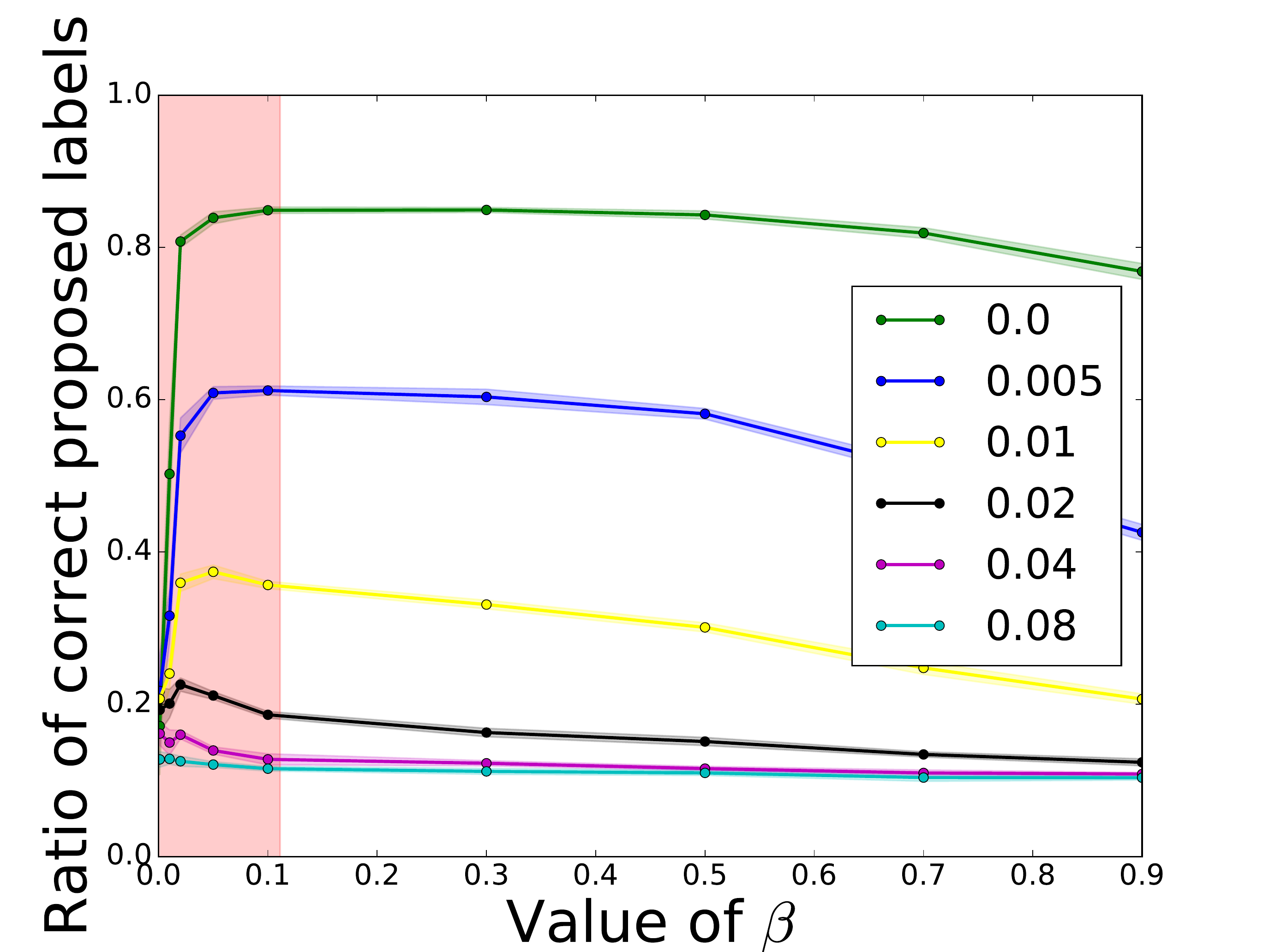} \\
   \subcaption{Usps}
 \end{minipage} & 
 \begin{minipage}[t]{0.3\textwidth}
  \centering
   \includegraphics[keepaspectratio, scale=0.17]{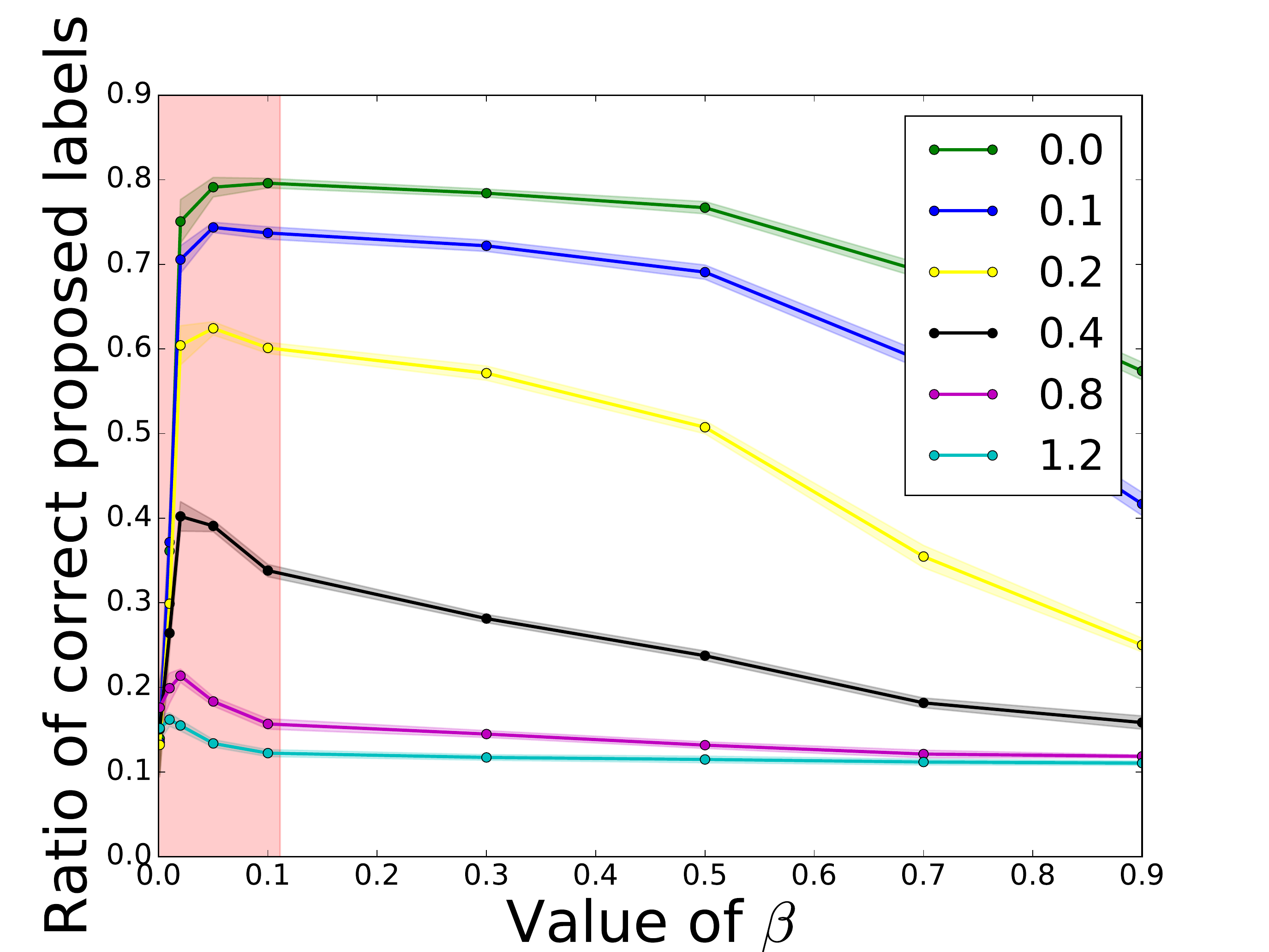} \\
   \subcaption{Pendigits}
 \end{minipage} \\
\end{tabular}
\caption{Relevance between amount of noise $\sigma$ and $\beta$ on real world datasets. Each legend corresponds to the amount of noise. Solid lines show the mean accuracy of ten trials, and the shaded areas around each plot show the standard deviation. The red shaded area shows the range of $\beta$ that can guarantee the convergence in Theorem 2.}
\label{figure1}
\end{figure*}

In this section, we demonstrate the experimental performance of proposed method, CSPA.
\vskip 0.1cm
\textbf{Datasets:} We used the following benchmark datasets: 20News, Sector, Vehicle, Shuttle,  Usps, Pendigits, Satimage, MNIST, Letter, Segment, Vowel, and Sensorless. The properties of these data are summarized in Table \ref{table1} and \ref{table2} in Appendix B in the supplementary material. All except MNIST can be downloaded from the {\em LIBSVM} \citep{libsvm} \footnote{\url{https://www.csie.ntu.edu.tw/~cjlin/libsvm/}}, and MNIST can be downloaded from the website of Sam Roweis \footnote{\url{http://cs.nyu.edu/~roweis/data.html.}}.  All instances were used for Segment and Vehicle, and training instances were used for the others. Normalization was applied to each feature vector if its norm is not one. 

\vskip 0.1cm
\textbf{Metrics:} In the partial feedback setting, the goal is to propose as many correct labels as possible while training. Therefore, following the existing research \citep{Banditron, Confidit}, we did not use the test accuracy as a metric, but instead evaluated the algorithms with the ratio of correct proposed labels while training.

\subsection{Relevance between hyperparameter $\beta$ and noisy data}
\label{rel}

First, we investigated the relevance between $\beta$ in the algorithm and how noisy the data is. We used a simple artificial data and real-world datasets, in particular, 20News, Sector, Vehicle, Shuttle, Usps and Pendigits. 

The artificial data had two dimensions and four classes. We generated $1,000$ samples for class $i$ from $\mathcal{N}\left(\mathbf{c}_i, \left(
    \begin{array}{cc}
      \sigma & 0 \\
      0 & \sigma \\
\end{array} \right)\right)$, where $\mathbf{c}_1 = (1, 1), \mathbf{c}_2 = (1, -1), \mathbf{c}_3 = (-1, 1), \mathbf{c}_4 = (-1, -1)$ and $\sigma$ is a positive real number. For real world datasets, we added a Gaussian noise of mean zero and standard deviation $\sigma$ to each feature. We chose the amount of noise so as to make it easy to see the degradation of accuracy for each dataset.

The results are shown in Figure \ref{figure0} and Figure \ref{figure1}. Off course, the larger the noise is, the lower the accuracy becomes, but smaller $\beta$ is robust to the degradation of accuracy. In particular, it is robust in the range satisfying $0 < \beta < \frac{1}{K-1}$, which can guarantee convergence in Theorem 2.

\subsection{Comparison with other methods}
\textbf{Algorithms:} We compared CSPA with the Banditron \citep{Banditron}, Confidit \citep{Confidit} and BPA \citep{BPA} algorithms in the partial feedback setting. Note that we implemented the Confidit algorithm in accordance with the experiments in \citet{Confidit}. CSPA and the other three algorithms require $O(Kd)$ memory for parameters, where $K$ is the number of classes and $d$ is the dimension of the feature vectors. The computational complexity per iteration is $O(Kd)$ for all the algorithms.

\subsubsection{Linear function case}
\begin{figure*}[t]
\begin{tabular}{ccc}
 \begin{minipage}[t]{0.3\textwidth}
  \centering
   \includegraphics[keepaspectratio, scale=0.17]{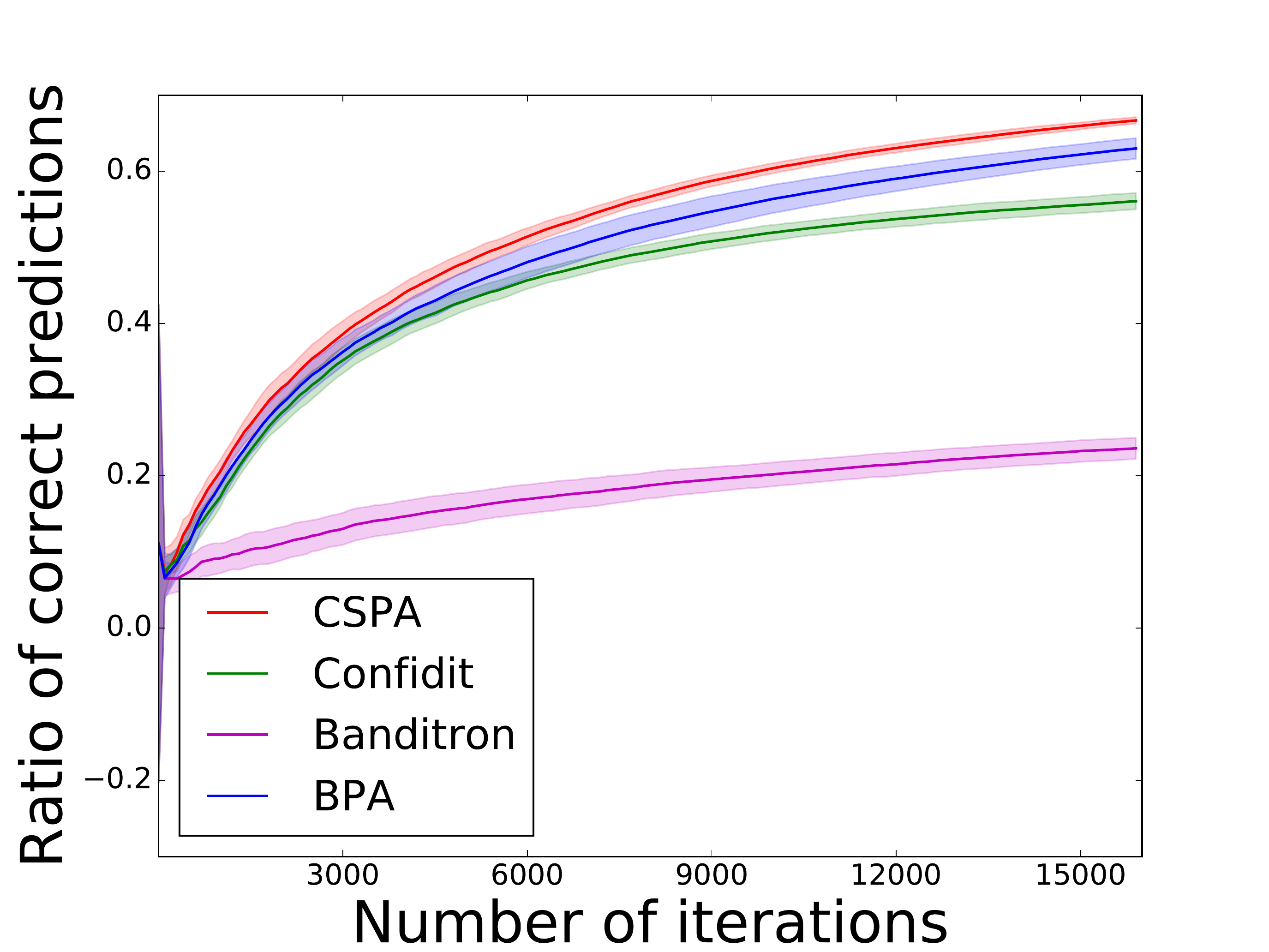} \\
   \subcaption{20News}
 \end{minipage} &
 \begin{minipage}[t]{0.3\textwidth}
  \centering
   \includegraphics[keepaspectratio, scale=0.17]{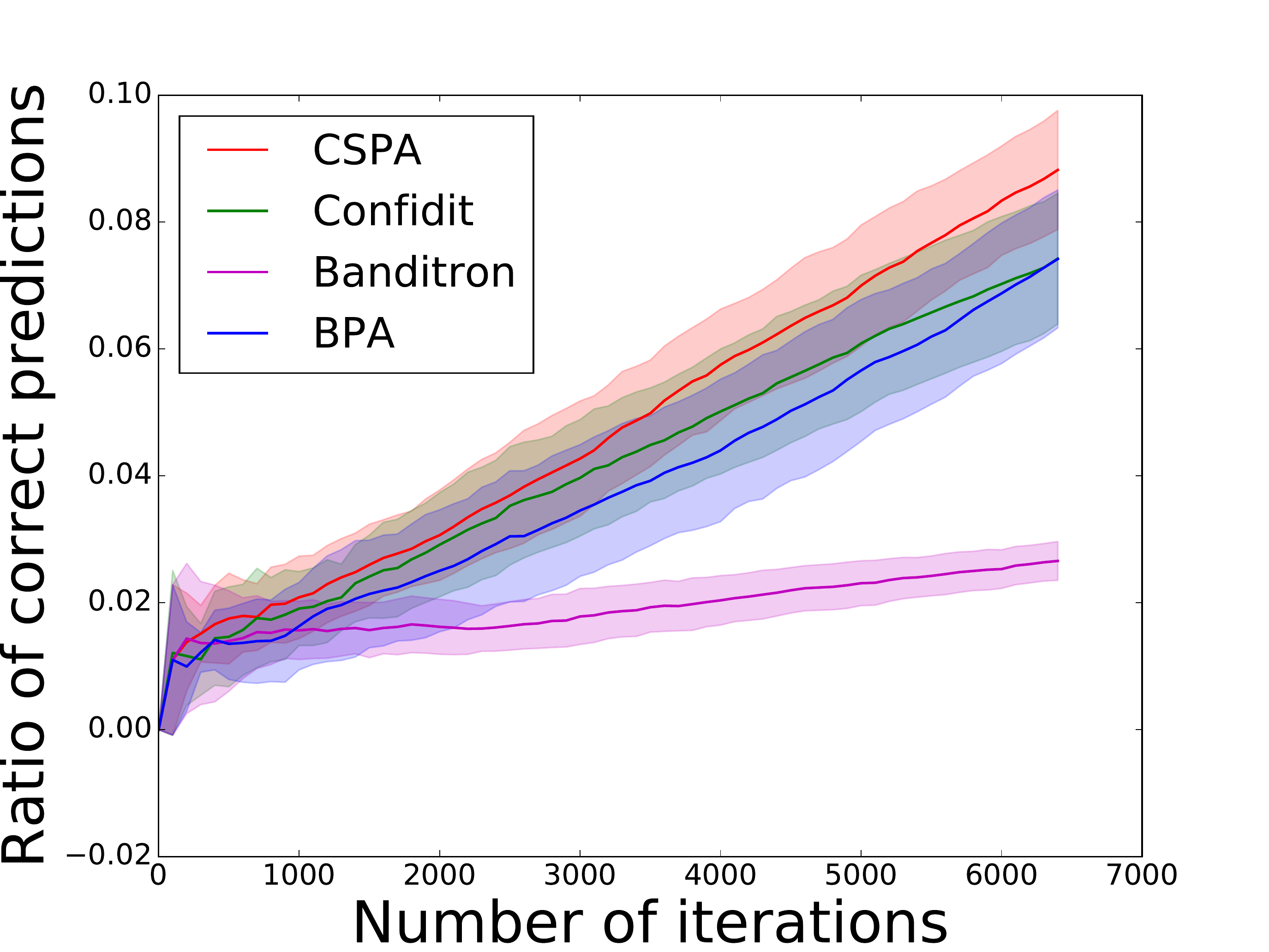} \\
   \subcaption{Sector}
 \end{minipage} & 
 \begin{minipage}[t]{0.3\textwidth}
  \centering
   \includegraphics[keepaspectratio, scale=0.17]{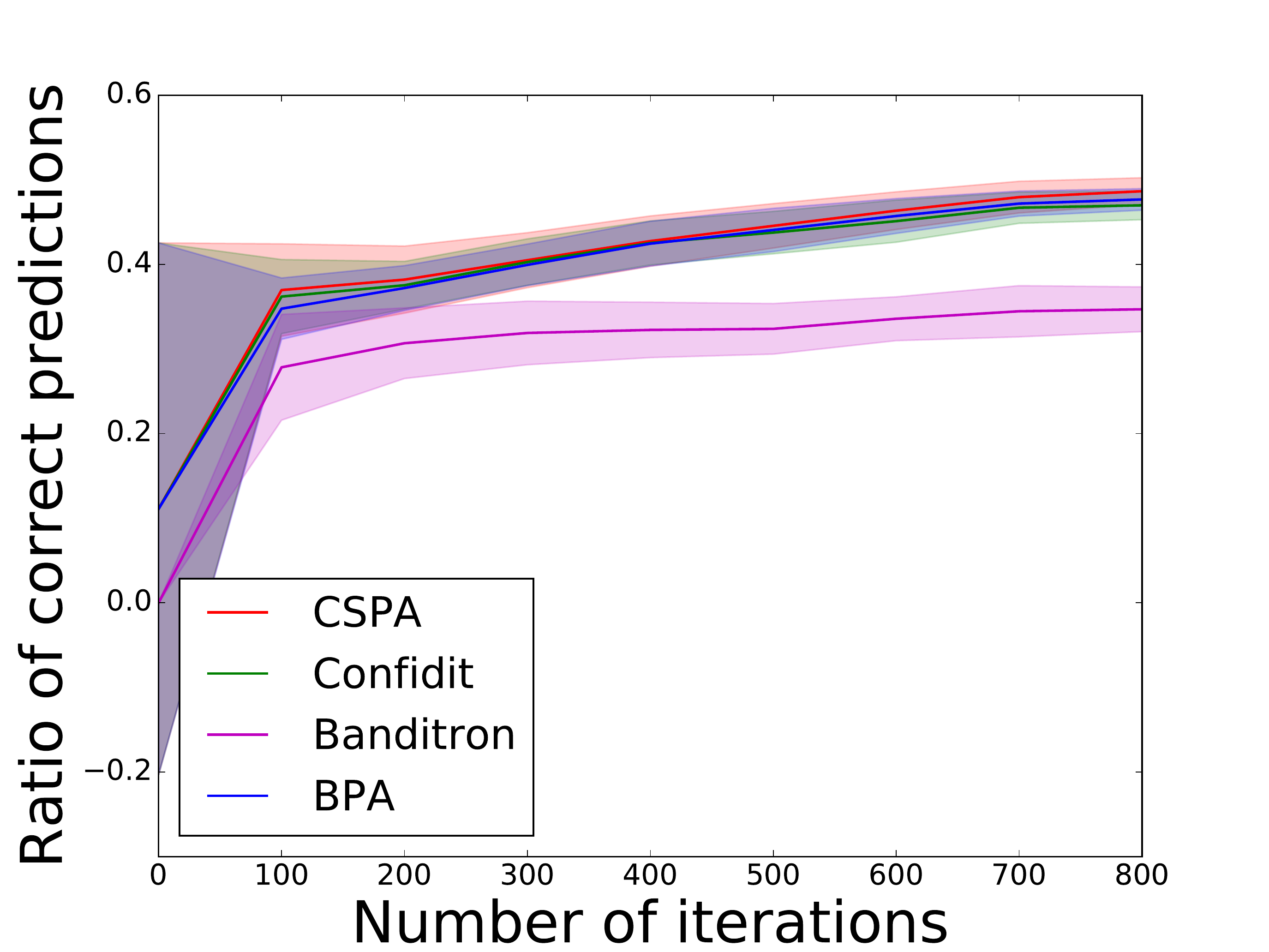}\\
   \subcaption{Vehicle}
 \end{minipage} \\
 \begin{minipage}[t]{0.3\textwidth}
  \centering
   \includegraphics[keepaspectratio, scale=0.17]{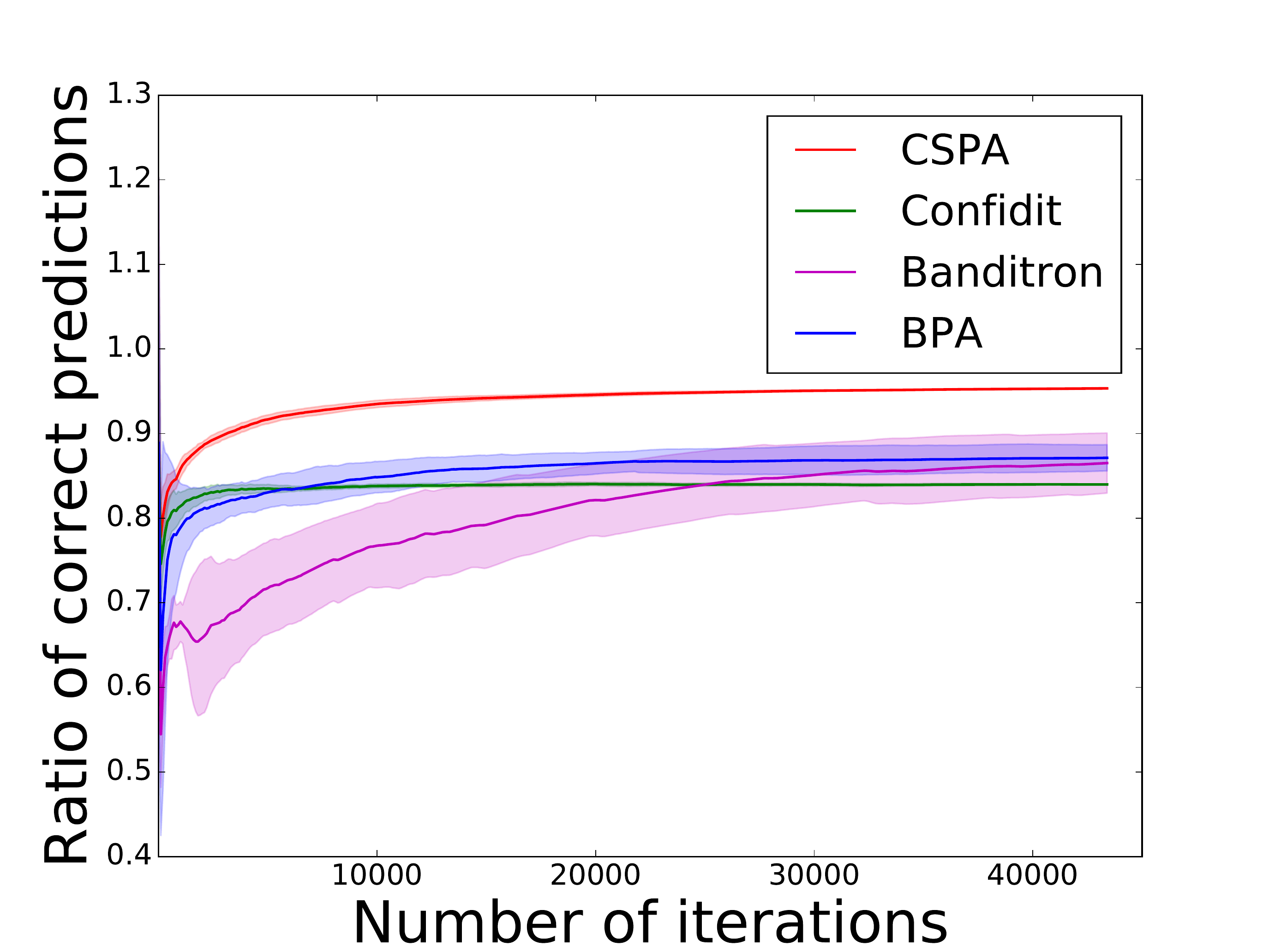} \\
   \subcaption{Shuttle}
 \end{minipage} &
 \begin{minipage}[t]{0.3\textwidth}
  \centering
   \includegraphics[keepaspectratio, scale=0.17]{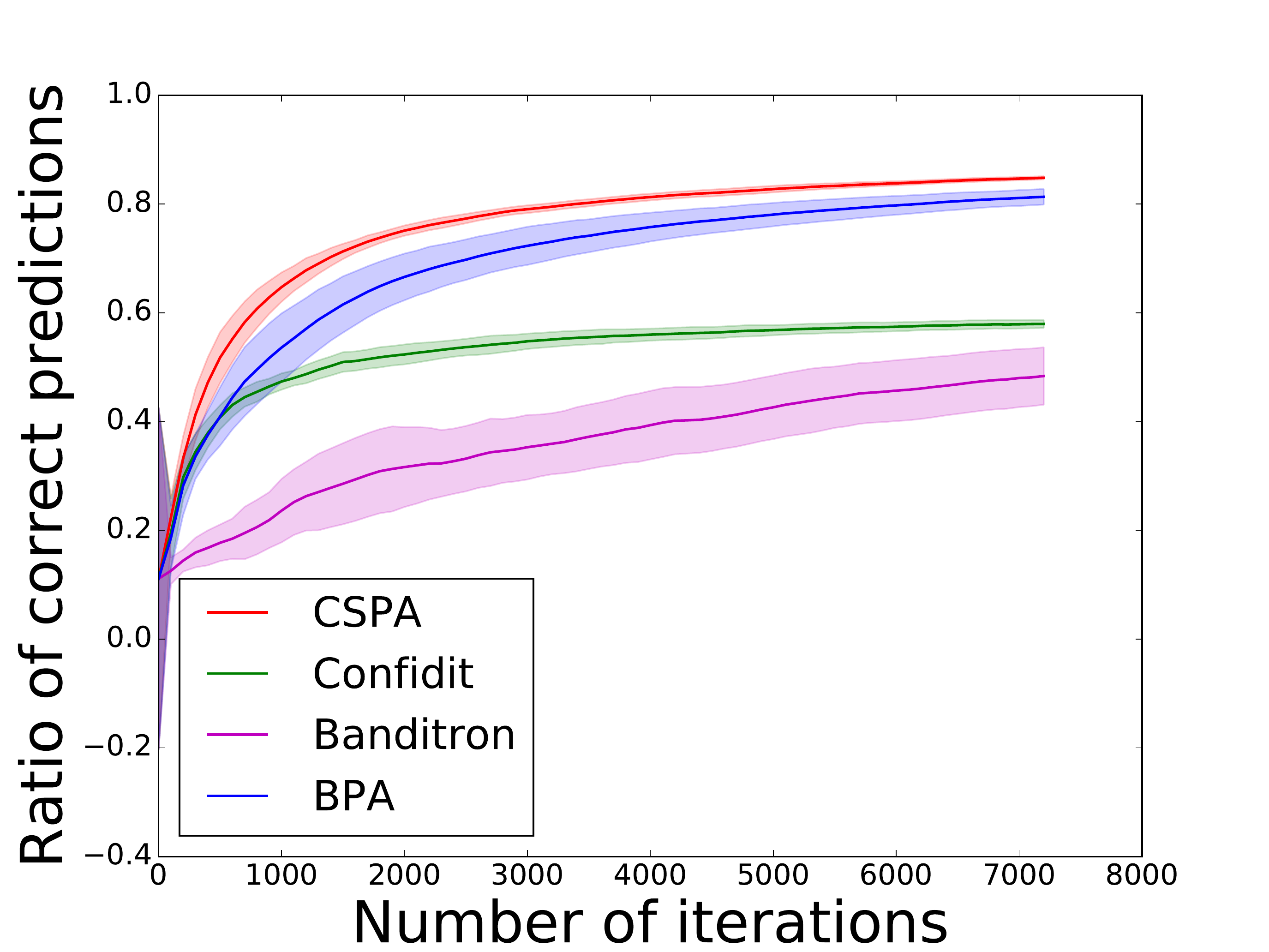} \\
   \subcaption{Usps}
 \end{minipage} & 
 \begin{minipage}[t]{0.3\textwidth}
  \centering
   \includegraphics[keepaspectratio, scale=0.17]{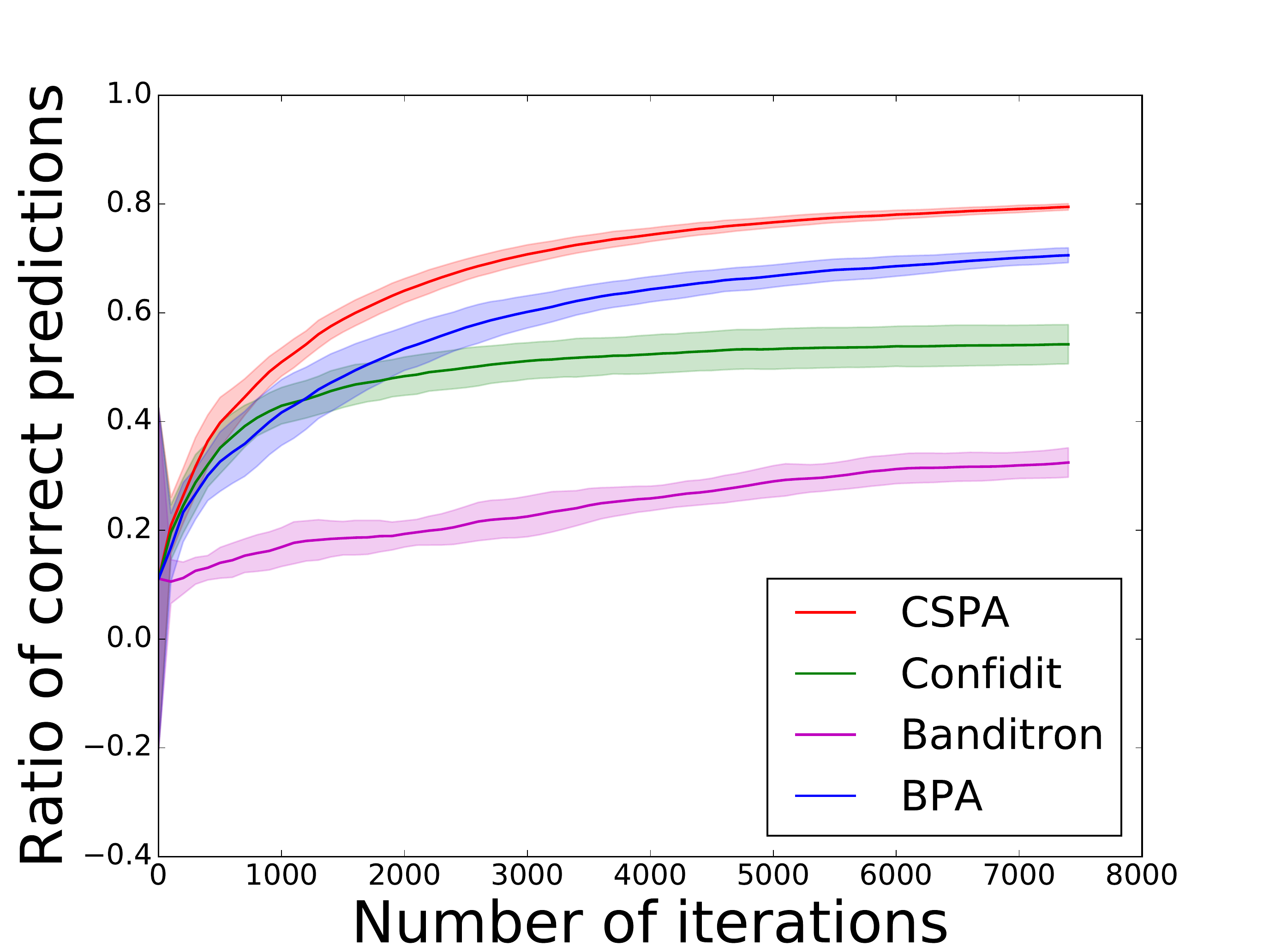} \\
   \subcaption{Pendigits}
 \end{minipage} \\
\end{tabular}
\caption{Ratios of correct predictions in partial feedback setting in the linear function case. Solid lines show the mean of ten trials, and the shaded areas around each plot show the standard deviation.}
\label{figure2}
\end{figure*}

\renewcommand{\arraystretch}{1.1}

\textbf{Parameter selection:} For the hyperparameter selection, following the experiment described in \citet{Banditron}, we compared the ratio of correct \emph{proposed} labels with ten different parameters. We compared the candidates $\{0.001, 0.01, 0.025, \allowbreak 0.05, 0.1, 0.2, 0.3, 0.4, 0.5, 0.6\}$ for $\gamma$ in Banditron and BPA, $\{10^{-4}, 10^{-3}, 10^{-2}, 10^{-1}, \allowbreak 10^0, 10^1, 10^2, 10^3, 10^4, 10^5\}$ for $\eta$ in Confidit and $\{0.1, 0.2, 0.3, 0.4, \allowbreak 0.5, 0.6, 0.7, 0.8, 0.9, \frac{1}{2(K-1)}\}$ for $\beta$ in CSPA, and chose the best hyperparameter, i.e., the one which attained the best ratio of the correct proposed labels for each algorithm. Here, $\beta = \frac{1}{2(K-1)}$ in the CSPA algorithm corresponds to the case $\alpha = \frac{1}{2}$ in Theorem \ref{main_theorem}.

\vskip 0.1cm
\textbf{Results:}  We evaluated the ratios of correct proposed labels in ten different runs for the four algorithms and took the average of every $100$ rounds. The results are shown in Figure \ref{figure2}. Figure \ref{figure2} shows the transitions of ratios of correct labels for the different datasets. CSPA outperforms the other three algorithms on five datasets and performed competitively on all datasets. In addition, as you can see from the shaded areas around each plot in Figure \ref{figure2}, CSPA is more stable than the others. The final results are shown in Table \ref{table1} in the supplemental material.

\subsubsection{Nonlinear function case}
\begin{figure*}[t]
\begin{tabular}{ccc}
 \begin{minipage}[t]{0.3\textwidth}
  \centering
   \includegraphics[keepaspectratio, scale=0.17]{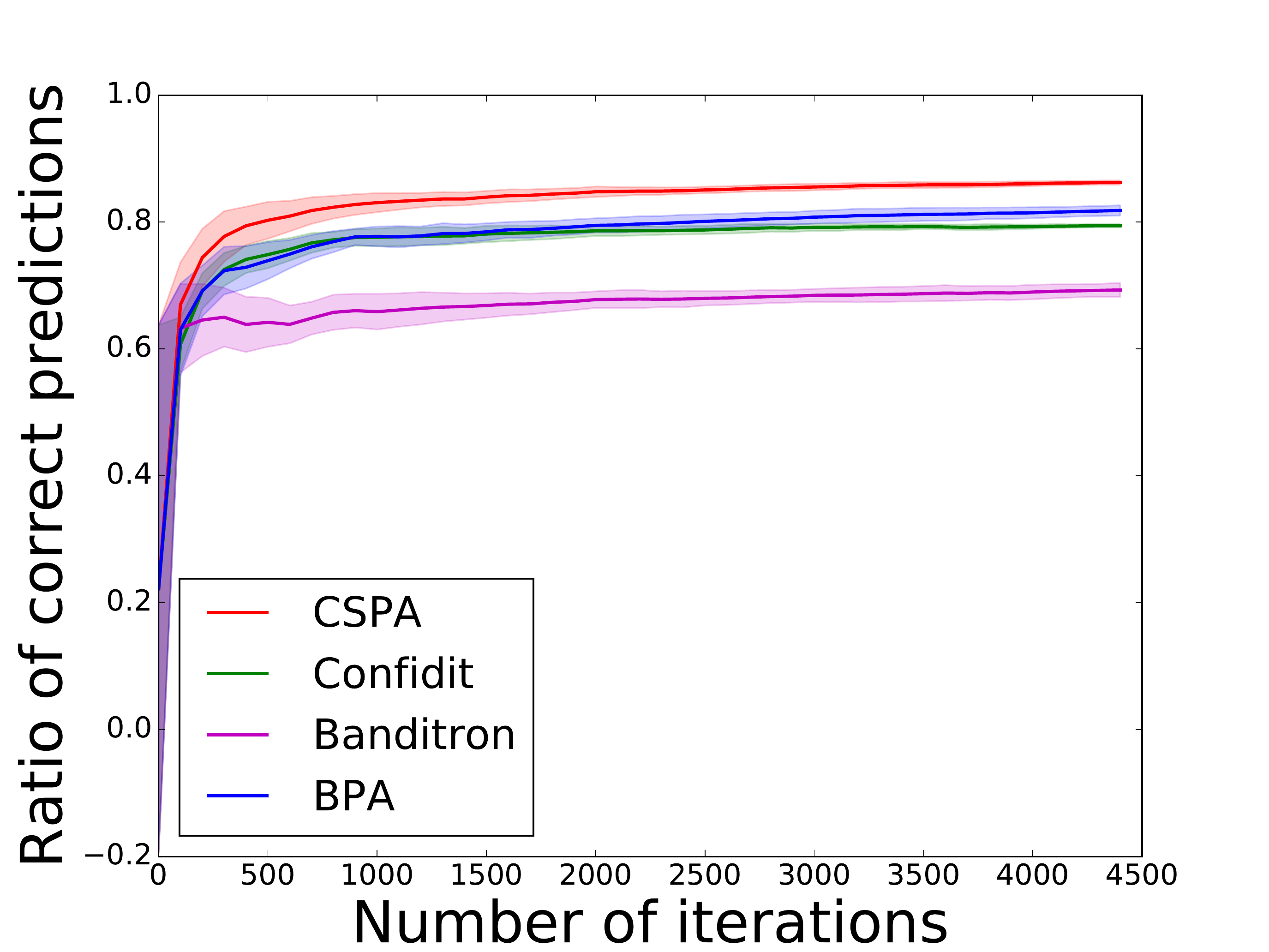} \\
   \subcaption{Satimage}
 \end{minipage} &
 \begin{minipage}[t]{0.3\textwidth}
  \centering
   \includegraphics[keepaspectratio, scale=0.17]{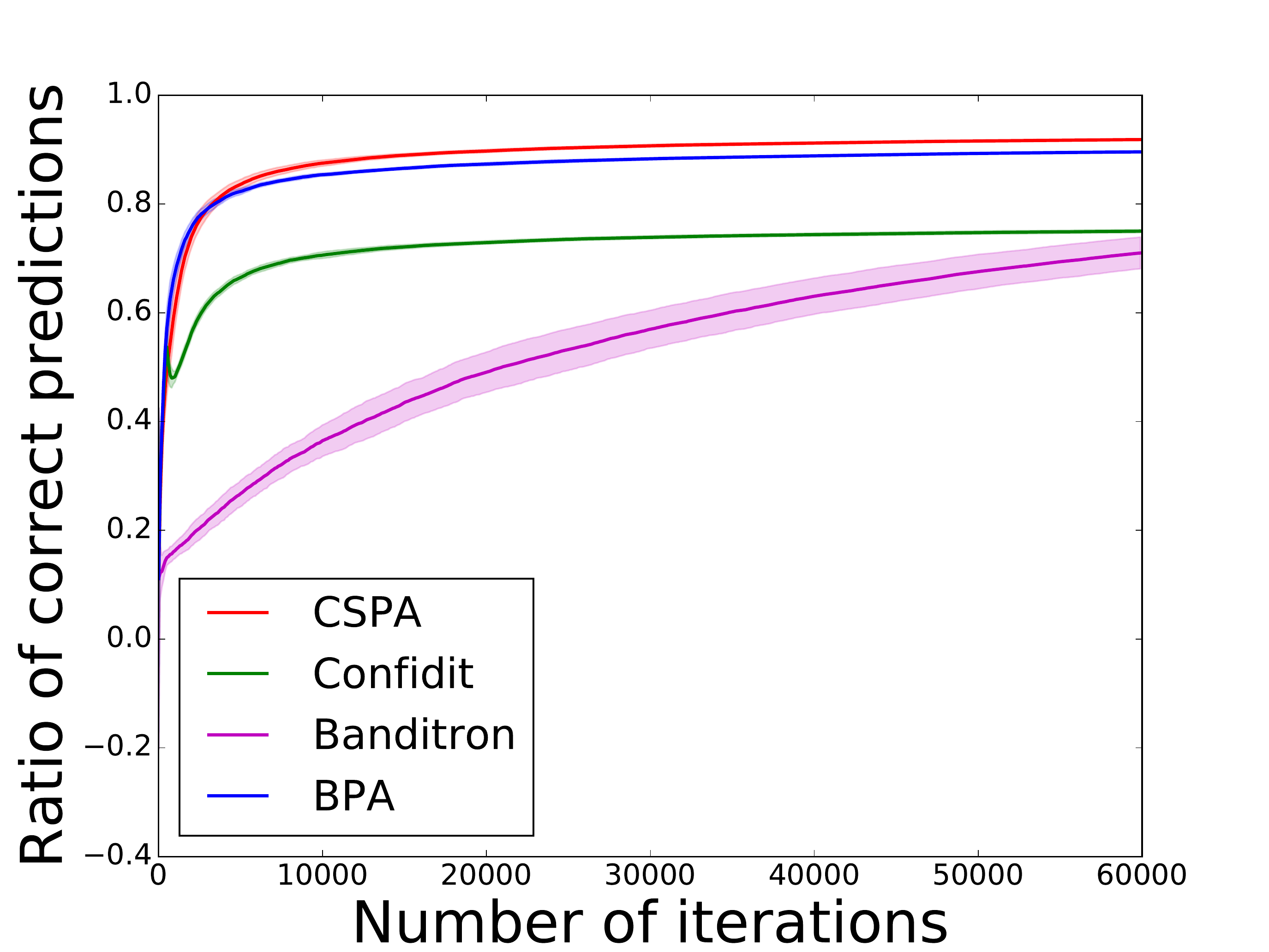} \\
   \subcaption{MNIST}
 \end{minipage} &
 \begin{minipage}[t]{0.3\textwidth}
  \centering
   \includegraphics[keepaspectratio, scale=0.17]{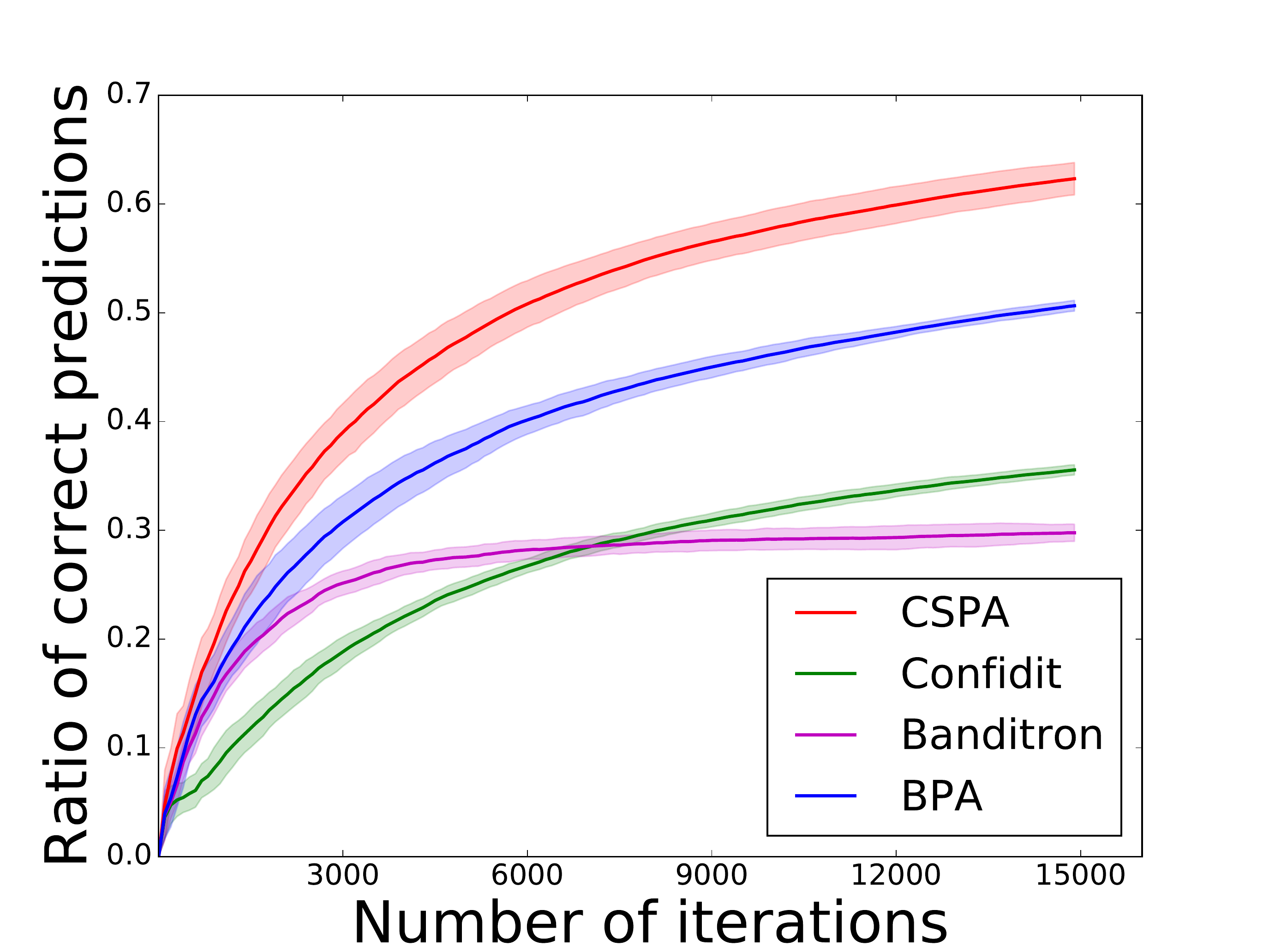} \\
   \subcaption{Letter}
 \end{minipage} \\
 \begin{minipage}[t]{0.3\textwidth}
  \centering
   \includegraphics[keepaspectratio, scale=0.17]{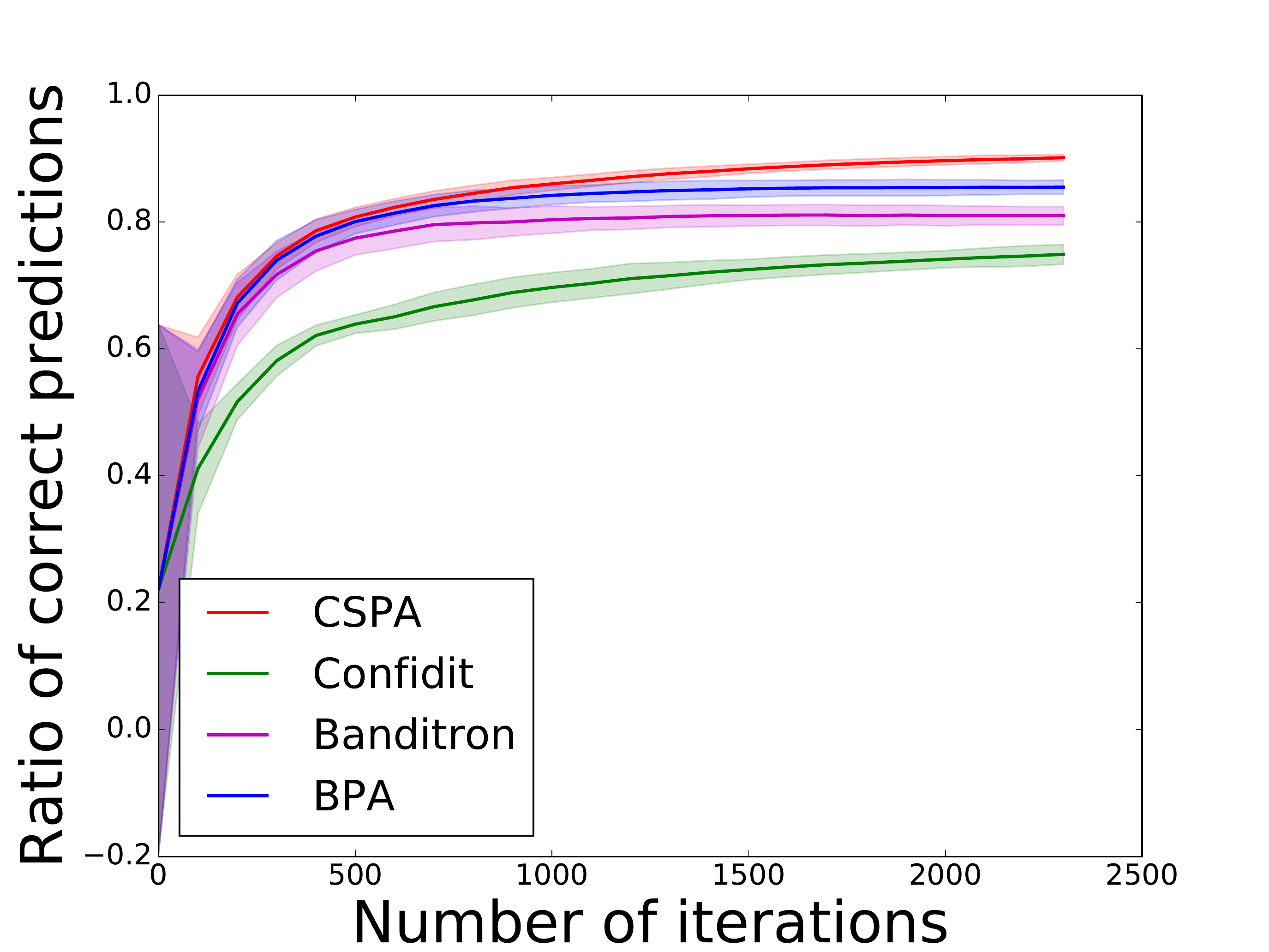} \\
   \subcaption{Segment}
 \end{minipage} & 
 \begin{minipage}[t]{0.3\textwidth}
  \centering
   \includegraphics[keepaspectratio, scale=0.17]{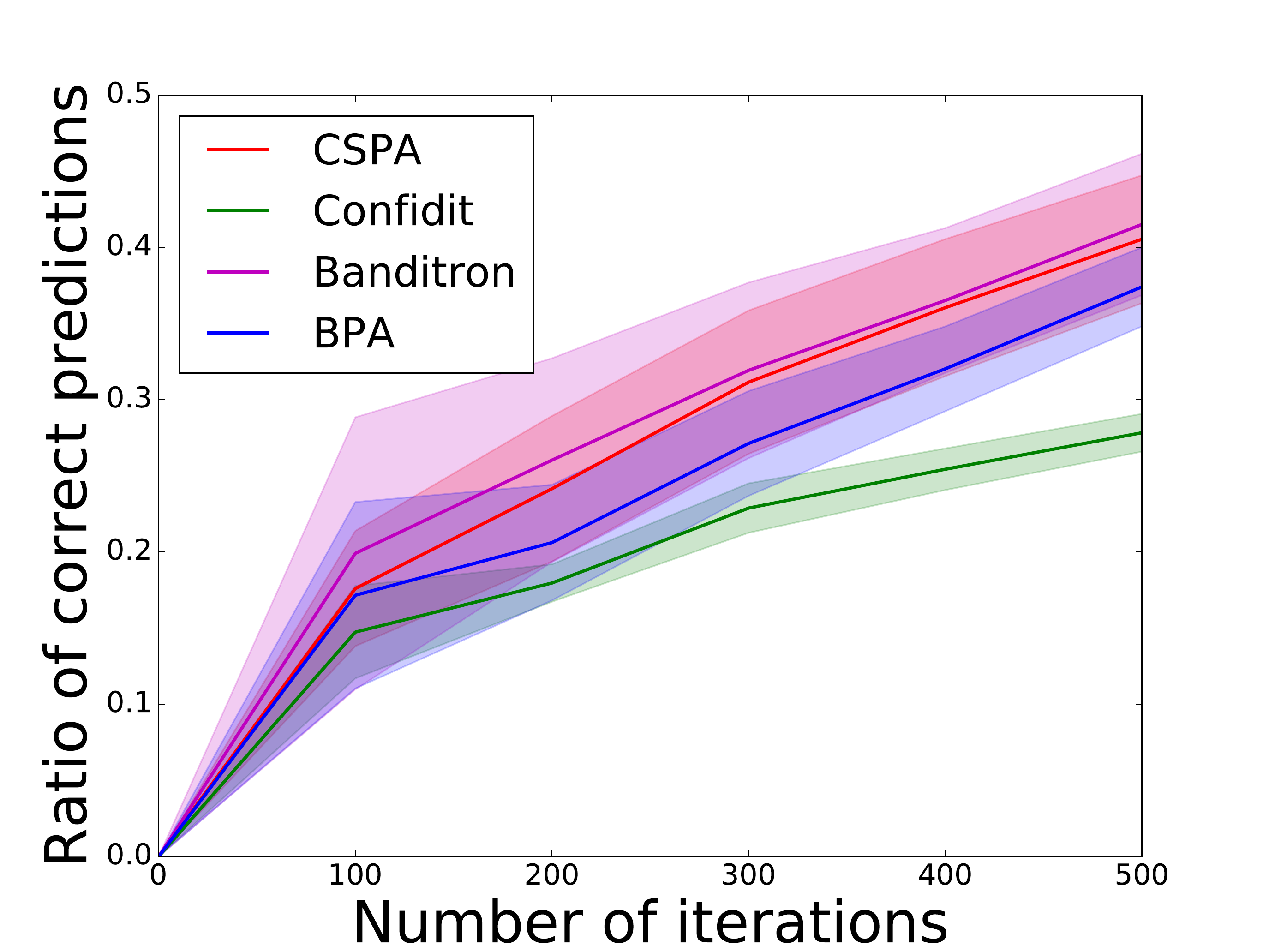} \\
   \subcaption{Vowel}
 \end{minipage} & 
 \begin{minipage}[t]{0.3\textwidth}
  \centering
   \includegraphics[keepaspectratio, scale=0.17]{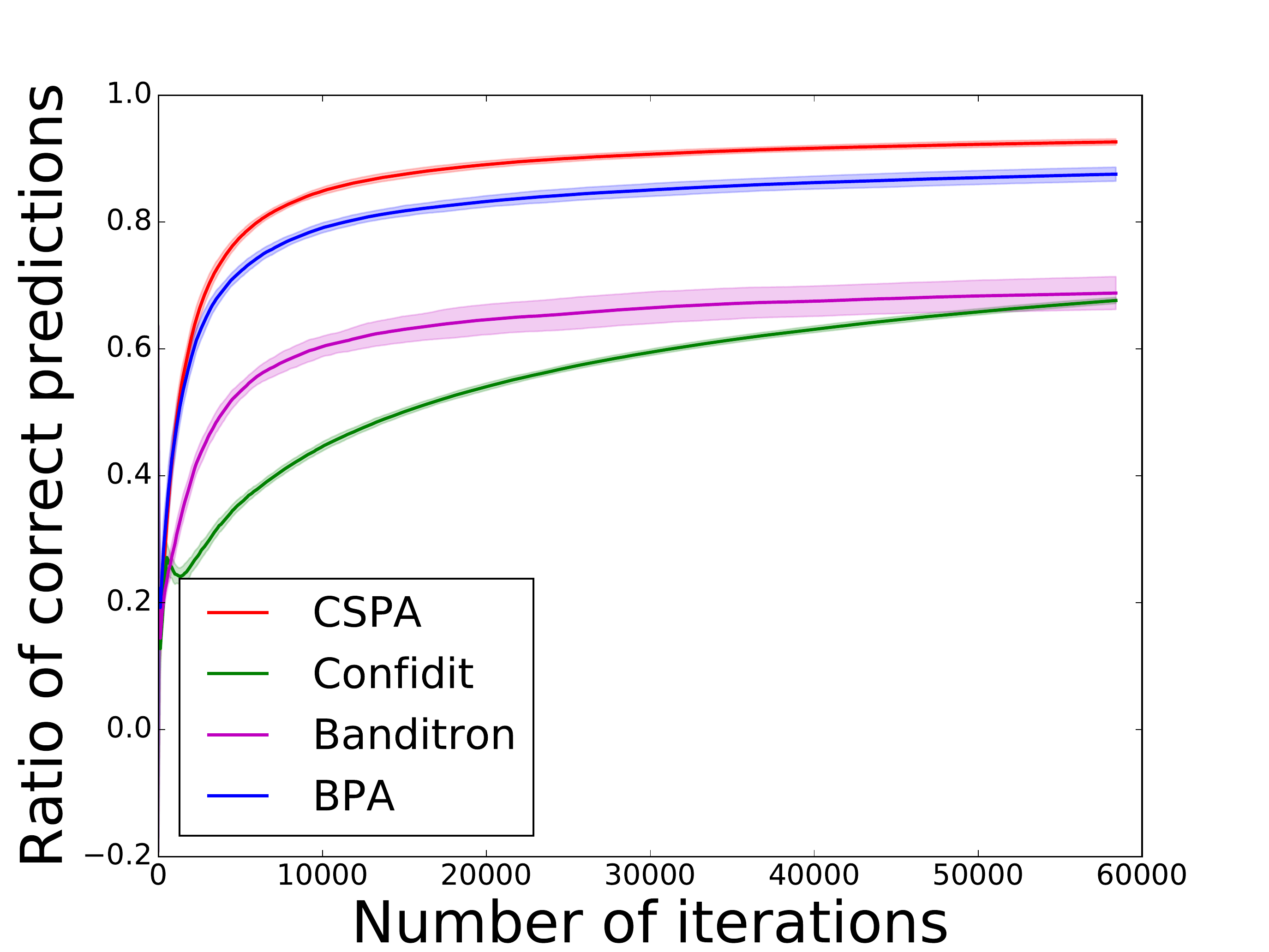} \\
   \subcaption{Sensorless}
 \end{minipage}
\end{tabular}
\caption{Ratios of correct predictions in partial feedback setting in nonlinear function case. Solid lines show the means of ten trials, and shaded areas show the standard deviation.}
\label{figure3}
\end{figure*}

To demonstrate the effectiveness of CSPA when the classification function is nonlinear, we experimentally compared CSPA with the other three algorithms using a nonlinear function. In order to make the classification function nonlinear, we used the Gaussian kernel $k({\bf x}, {\bf x}_i) = \exp\left(-\frac{\|{\bf x} - {\bf x}_i\|^2}{g}\right)$. Also, the first $700$ instances were used as a support set for kernels and we denote this set as $B$. That is, we used the following model:
\begin{equation}
f({\bf x}) = \argmax_{i \in \{1, \ldots, K\}} \sum_{j=1}^{700} {\bf w}_{i, j} k({\bf x}, {\bf x}_j),
\end{equation}
where ${\bf w}_{i, j}$ represents the $j$-th parameter of ${\bf w}_i$.

From the above, the four algorithms need $O(|B|(K+D))$ parameters and the computational complexity per iteration is $O(|B|KD)$.  We used only the first $528$ instances from the Vowel dataset as the support set because the number of instances in Vowel is less than $700$. We used the technique in \citet{Kernelmethod} to apply the kernel method to the Banditron and Confidit algorithm, which are based on the perceptron algorithm.

\vskip 0.1cm
\textbf{Parameter selection: }  We used a grid search to make the parameter selection. We compared candidates $\{0.01, 0.1, 1.0, 10.0, 100.0\}$ for $g$ of the Gaussian kernel in all algorithms, and $\{0.001, 0.025, 0.1, 0.3, 0.4, 0.6\}$ for $\gamma$ in Banditron and BPA, $\{10^{-4}, 10^{-2}, 10^0, 10^2, 10^4\}$ for $\eta$ in Confidit and $\{0.1, 0.3, 0.5, 0.7, \frac{1}{2(K-1)}\}$ for $\beta$ in CSPA. That is, all four algorithms selected the best pair of hyperparameters from 25 candidates.

\vskip 0.1cm
\textbf{Results: } Similarly to the linear function case, we evaluated the ratios of correct proposed labels in ten different runs of the four algorithms and took the average of every 100 rounds. The results are shown in Figure \ref{figure3}. When the classification function is nonlinear, CSPA also outperforms the other three algorithms on almost all datasets. The final results are shown in Table \ref{table2} in the supplemental material.

    \section{Discussions}
As shown in Section \ref{rel}, the range of the hyperparameter~$\beta$ that gives the convergence guarantee is robust to noisy data. We also showed that for clean data, choosing $\beta$ from this range is not necessarily better. Our theoretical analysis considered the adversarial case, so the algorithm should handle all the cases; as a result, it should behave more conservatively and $\beta$ should be set to a lower value. 

In terms of applications, there would be few cases where we should consider the adversarial case, so choosing $\beta$ outside the guaranteed range would give good empirical results. In addition, a theoretical analysis of less adversarial cases like \citet{Confidit} would be worth being considered.
    \section{Conclusion}
In this paper, we proposed CSPA, a novel online multiclass classification algorithm based on the prediction margin for the partial feedback setting. Our algorithm focused on the prediction margin and learning from complementary labels in the context of online classification. 
Our experiments showed that CSPA significantly outperformed other methods in the same setting. Furthermore, we provided a theoretical guarantee for CSPA through deriving a cumulative square loss bound, which is an upper bound of the number of mistakes. 

For another direction, \citet{mr} considered multi-label classification with partial feedback, where the correct labels of each instance are not necessarily one. This setting includes the multiclass classification case and can be applied to more applications, so extending our algorithm to this case would be a future work.
    \clearpage
    \section*{Acknowledgement}
IS was supported by JST CREST Grant Number JPMJCR17A1.
MS was supported by the International Research Center for Neurointelligence (WPI-IRCN) at The University of Tokyo Institutes for Advanced Study.

    \bibliographystyle{plainnat}
    \clearpage
    \appendix
    \section{Proof of Theorem \ref{main_theorem}}
\begin{proof}
As in \citep{PA, SPA}, we define
\begin{equation}
\label{delta}
\Delta_t = \sum_{i \in \{1, \ldots, K\}} \left( \|{\bf w}_{i, t} - {\bf u}_{i}\|^2 - \|{\bf w}_{i, t+1} - {\bf u}_{i}\|^2 \right),
\end{equation}
and consider upper and lower bounds of
\begin{equation}
\sum_{t=1}^{T} \Delta_t.
\end{equation}

First, we derive an upper bound of (\ref{delta}). Using telescoping sum, we have the following inequality: 
\begin{equation}
\begin{split}
\label{upper}
\sum_{t=1}^{T} \Delta_t &= \sum_{t=1}^{T} \sum_{i \in \{1, \ldots, K\}} \left( \|{\bf w}_{i, t} - {\bf u}_{i}\|^2 - \|{\bf w}_{i, t+1} - {\bf u}_{i}\|^2 \right) \\
&= \sum_{i \in \{1, \ldots, K\}} \sum_{t=1}^{T} \left( \|{\bf w}_{i, t} - {\bf u}_{i}\|^2 - \|{\bf w}_{i, t+1} - {\bf u}_{i}\|^2 \right) \\
&= \sum_{i \in \{1, \ldots, K\}} \left( \|{\bf w}_{i, 1} - {\bf u}_{i}\|^2 - \|{\bf w}_{i, T+1} - {\bf u}_{T+1}\|^2 \right) \\
&\leq \sum_{i \in \{1, \ldots, K\}} \|{\bf w}_{i, 1} - {\bf u}_{i}\|^2 \\
&= \sum_{i \in \{1, \ldots, K\}} \|{\bf u}_{i}\|^2 \,\, ({\bf w}_{i, 1} = {\bf 0} \,\, \forall i \in \{1, \ldots, K\}).
\end{split}
\end{equation}

Next, we derive a lower bound of (\ref{delta}). When $M_t =$ True, CSPA uses SPA algorithm. In this case, it is shown in \citep{SPA} that 
\begin{equation}
\begin{split}
\Delta_t & \geq \left(\frac{|S_t| + 3}{4|S_t| + 4}\ell_t - \ell_t^{\star}\right)\frac{\ell_t}{\|{\bf x}_t\|^2}.
\end{split}
\end{equation}
Then, we gain the following bound:
\begin{equation}
\begin{split}
\label{SPA_bound}
\Delta_t & \geq \left(\frac{K + 3}{4(K + 1)}\ell_t - \ell_t^{\star}\right)\frac{\ell_t}{\|{\bf x}_t\|^2} \\
&= \left(\frac{K + 3}{4(K + 1)}\ell_t - \ell_t^{\star}\right)\frac{\ell_t}{R^2} \,\, (\because \|{\bf x}_t\|^2 = R^2).
\end{split}
\end{equation}

For the case $M_t =$ False, we apply the CPA update rule to (\ref{delta}), which yields, 
\begin{equation}
\begin{split}
\label{CPA_bound}
\Delta_t &= \sum_{i \in \{1, \ldots, K\}} \|{\bf w}_{i, t} - {\bf u}_{i}\|^2 - \sum_{i \in \{1, \ldots, K\} \backslash \{{\tilde y}_t\}} \|{\bf w}_{i, t} + \frac{\beta\ell_t}{K\|{\bf x}_t\|^2}{\bf x}_t - {\bf u}_{i}\|^2 \\ 
& \hspace{7cm} - \|{\bf w}_{{\tilde y}_t, t} - \frac{\beta(K-1)\ell_t}{K\|{\bf x}_t\|^2}{\bf x}_t - {\bf u}_{{\tilde y}_t}\|^2 \\
&= -2 \sum_{i \in \{1, \ldots, K\} \backslash \{{\tilde y}_t\}} \left\{\left({\bf w}_{i, t} - {\bf u}_{i}\right)^{\top}\left(\frac{\beta\ell_t}{K\|{\bf x}_t\|^2}{\bf x}_t\right)\right\} \\ 
& \hspace{0.5cm} + 2(K-1)\left\{({\bf w}_{{\tilde y}_t, t} - {\bf u}_{{\tilde y}_t})^{\top}\left(\frac{\beta\ell_t}{K\|{\bf x}_t\|^2}{\bf x}_t\right)\right\} - \frac{\beta^2(K-1)\ell_t^2}{K^2\|{\bf x}_t\|^2} - \frac{\beta^2(K-1)^2\ell_t^2}{K^2\|{\bf x}_t\|^2} \\
&= \frac{2\beta\ell_t}{K\|{\bf x}_t\|^2} \left[ \sum_{i \in \{1, \ldots, K\} \backslash \{{\tilde y}_t\}} \left\{ \left(1 + {\bf w}_{{\tilde y}_t}^{\top}{\bf x}_t - {\bf w}_{i}^{\top}{\bf x}_t \right) - \left(1 + {\bf u}_{{\tilde y}_t}^{\top}{\bf x}_t - {\bf u}_{i}^{\top}{\bf x}_t \right) \right\} \right] \\
& \hspace{10cm} - \frac{K-1}{K}\frac{\beta^2\ell_t^2}{\|{\bf x}_t\|^2} \\
&\geq \frac{2\beta\ell_t}{K\|{\bf x}_t\|^2} \left[ \sum_{i \in \{1, \ldots, K\} \backslash \{{\tilde y}_t\}} \left\{ \ell_t - \left(1 + {\bf u}_{{\tilde y}_t}^{\top}{\bf x}_t - {\bf u}_{i}^{\top}{\bf x}_t \right) \right\} \right] - \frac{K-1}{K}\frac{\beta^2\ell_t^2}{\|{\bf x}_t\|^2} \\
& \hspace{10cm} (\because \mathrm{\, definition \, of \,} \ell_t) \\
&= \frac{2\beta\ell_t}{K\|{\bf x}_t\|^2} \left[(K-1)\ell_t - \sum_{i \in \{1, \ldots, K\} \backslash \{{\tilde y}_t\}} \left(1 + {\bf u}_{{\tilde y}_t}^{\top}{\bf x}_t - {\bf u}_{i}^{\top}{\bf x}_t \right) \right] - \frac{K-1}{K}\frac{\beta^2\ell_t^2}{\|{\bf x}_t\|^2} \\
&= \frac{2\beta\ell_t}{K\|{\bf x}_t\|^2} \left[(K-1)\ell_t - \sum_{i \in \{1, \ldots, K\} \backslash \{{\tilde y}_t, y_t\}} \left(1 + {\bf u}_{{\tilde y}_t}^{\top}{\bf x}_t - {\bf u}_{i}^{\top}{\bf x}_t \right) - \left(1 + {\bf u}_{{\tilde y}_t}^{\top}{\bf x}_t - {\bf u}_{y_t}^{\top}{\bf x}_t \right) \right] \\ 
& \hspace{11cm} - \frac{K-1}{K}\frac{\beta^2\ell_t^2}{\|{\bf x}_t\|^2} \\
&\geq \frac{2\beta\ell_t}{K\|{\bf x}_t\|^2} \left[(K-1)\ell_t - \sum_{i \in \{1, \ldots, K\} \backslash \{{\tilde y}_t, y_t\}} \left(1 + {\bf u}_{{\tilde y}_t}^{\top}{\bf x}_t - {\bf u}_{i}^{\top}{\bf x}_t \right) - \ell_t^{\star} \right] - \frac{K-1}{K}\frac{\beta^2\ell_t^2}{\|{\bf x}_t\|^2} \\
& \hspace{10cm} (\because \mathrm{\, definition \, of \,} \ell_t^{\star}) \\
&\geq \frac{2\beta\ell_t}{K\|{\bf x}_t\|^2} \left\{(K-1)\ell_t - (K-2) - \alpha - \ell_t^{\star} \right\} - \frac{K-1}{K}\frac{\beta^2\ell_t^2}{\|{\bf x}_t\|^2} \,\, (\because \textrm{ assumption of } (\ref{assumption})) \\
&= \frac{\beta(K-1)\ell_t^2}{K\|{\bf x}_t\|^2}(2-\beta) - \frac{2\beta(K-2+\alpha)\ell_t}{K\|{\bf x}_t\|^2} - \frac{2\beta\ell_t\ell_t^{\star}}{K\|{\bf x}_t\|^2} \\
&\geq \frac{\beta(K-1)\ell_t^2}{K\|{\bf x}_t\|^2}(2-\beta) - \frac{2\beta(K-2+\alpha)\ell_t^2}{K\|{\bf x}_t\|^2} - \frac{2\beta\ell_t\ell_t^{\star}}{K\|{\bf x}_t\|^2} \,\, (\because \ell_t \geq 1 \textrm{ by } (\ref{CPA_l})) \\
&= \frac{\beta\ell_t^2}{K\|{\bf x}_t\|^2}\left\{(K-1)(2-\beta) - 2(K-2 + \alpha)\right\} - \frac{2\beta\ell_t\ell_t^{\star}}{K\|{\bf x}_t\|^2} \\
&= \frac{\beta\left\{2(1-\alpha) - (K-1)\beta\right\}}{KR^2}\ell_t^2 - \frac{2\beta\ell_t\ell_t^{\star}}{KR^2} \,\, (\because \|{\bf x}_t\|^2 = R^2).
\end{split}
\end{equation}

Note that $2(1-\alpha) - (K-1)\beta > 0$ by the assumption of (\ref{beta_assumption}). Then, we introduce $\gamma$ defined as follows:
\begin{equation}
\gamma = 2(1-\alpha) - (K-1)\beta > 0.
\end{equation}

Combining (\ref{SPA_bound}) and (\ref{CPA_bound}), we obtain the following bound:
\begin{equation}
\begin{split}
\Delta_t & \geq \min \left\{\frac{\beta\gamma}{KR^2}\ell_t^2 - \frac{2\beta\ell_t\ell_t^{\star}}{KR^2}, \frac{KR^2 + 3}{4(K + 1)R^2}\ell_t^2 - \frac{1}{R^2}\ell_t^{\star}\ell_t\right\} \\
& \geq \min\left\{\frac{\beta\gamma}{KR^2}, \frac{K+3}{4(K+1)R^2}\right\}\ell_t^2 - \max\left\{\frac{2\beta}{KR^2}, \frac{1}{R^2}\right\}\ell_t\ell_t^{\star},
\end{split}
\end{equation}
which is equivalent to 
\begin{equation}
\ell_t^2 \leq \frac{1}{\min\left\{\frac{\beta\gamma}{KR^2}, \frac{K+3}{4(K+1)R^2}\right\}}\left\{\Delta_t + \max\left\{\frac{2\beta}{KR^2}, \frac{1}{R^2}\right\}\ell_t\ell_t^{\star}\right\}.
\end{equation}
Taking the sum over $t = 1, \ldots, K$ and combining it with (\ref{upper}), we obtain
\begin{equation}
\sum_{t=1}^T \ell_t^2 \leq \frac{1}{\min\left\{\frac{\beta\gamma}{KR^2}, \frac{K+3}{4(K+1)R^2}\right\}}\left\{\sum_{i \in \{1, \ldots, K\}} \|{\bf u}_i\|^2 + \max\left\{\frac{2\beta}{KR^2}, \frac{1}{R^2}\right\}\sum_{t=1}^{T}\ell_t\ell_t^{\star} \right\}.
\end{equation}

Here, we define $L$ and $L^{\star}$ as follows:
\begin{equation}
\begin{split}
& L = \sqrt{\sum_{t=1}^{T} \ell_t^2}, \\
& L^{\star} = \sqrt{\sum_{t=1}^{T} (\ell_t^{\star})^2}.
\end{split}
\end{equation}
Then, using Cauchy-Schwartz inequality, $\sum_{t=1}^{T}\ell_t\ell_t^{\star} \leq LL^{\star}$ holds, so the following inequality is obtained:
\begin{equation}
L^2 \leq \frac{1}{\min\left\{\frac{\beta\gamma}{KR^2}, \frac{K+3}{4(K+1)R^2}\right\}}\left\{\sum_{i \in \{1, \ldots, K\}} \|{\bf u}_i\|^2 + \max\left\{\frac{2\beta}{KR^2}, \frac{1}{R^2}\right\}LL^{\star} \right\}, 
\end{equation}
which is equivalent to
\begin{equation}
\label{poli}
\chi L^2 - \psi L^{\star}L - \sum_{i \in \{1, \ldots, K\}} \|{\bf u}_i\|^2 \leq 0
\end{equation}
where 
\begin{equation}
\begin{split}
& \chi = \frac{1}{R^2}\min\left\{\frac{\beta\gamma}{K}, \frac{K+3}{4(K+1)}\right\}, \\
& \psi = \frac{1}{R^2}\max\left\{\frac{2\beta}{K}, 1\right\}.
\end{split}
\end{equation}
We regard (\ref{poli}) as a quadratic equation with respect to $L$, we obtain
\begin{equation}
\begin{split}
L & \leq \frac{\psi L^{\star} + \sqrt{\psi^2 (L^{\star})^2+ 4\chi\sum_{i \in \{1, \ldots, K\}}\|{\bf u}_i\|^2}}{2\chi} \\
& \leq \frac{\psi}{\chi}L^{\star} + \sqrt{\frac{\sum_{i \in \{1, \ldots, K\}}\|{\bf u}_i\|^2}{\chi}} \,\, (\because \sqrt{x+y} \leq \sqrt{x} + \sqrt{y}).
\end{split}
\end{equation}
Then, the following holds:
\begin{equation}
\begin{split}
\frac{2\beta}{K} &< \frac{4(1-\alpha)}{K(K-1)} \\
      &< \frac{4}{K(K-1)} \,\, (\because \alpha > 0)\\
      &\leq \frac{2}{3} \,\,(\because K \geq 3), \\
\end{split}
\end{equation}
which means that $\psi$ is always equal to $\frac{1}{R^2}$.

When $\beta = \frac{1-\alpha}{K-1}$, 
\begin{equation}
\begin{split}
\frac{\beta\gamma}{K} & = \frac{\beta(2(1-\alpha) - (K-1)\beta)}{K} \\
& = \frac{(1-\alpha)^2}{K(K-1)}, \\
& < \frac{K+3}{4(K+1)},
\end{split}
\end{equation}
from $K \geq 3$. Therefore, we have:
\begin{equation}
\chi = \frac{1}{R^2}\frac{(1-\alpha)^2}{K(K-1)}.
\end{equation}

Then, we obtain
\begin{equation}
\sum_{t=1}^{T} \ell_t^2 \leq \left(\frac{K(K-1)}{(1-\alpha)^2}\sqrt{\sum_{i=1}^{T} (\ell_t^{\star})^2} + \frac{R\sqrt{K(K-1)}}{1-\alpha}\sqrt{\sum_{i \in \{1, \ldots, K\}}\|{\bf u}_i\|^2} \right)^2.
\end{equation}

\end{proof}

    \section{More Experimental Results}
\begin{table*}[htb]
\centering
\caption{Average and standard deviation of the ratio of correct \emph{proposed} labels in linear function case in percentage over ten trials. The methods with best $5\%$ t-test results are in boldface.}
\label{table1}
\vspace{0.1in}
\scalebox{1.0}{
\begin{tabular}{|c||l|l|l||c|c|c|c|}
\hline
& Labels & Instances & Features
&\multicolumn{1}{c|}{\hspace{0.1cm}CSPA\,}&\multicolumn{1}{c|}{Banditron}&\multicolumn{1}{c|}{Confidit}&\multicolumn{1}{c|}{\hspace{0.25cm}BPA\,\,\,} \\
\hline
\multirow{2}{40pt}{20News} & \multirow{2}{8pt}{20} & \multirow{2}{22pt}{15,935} & \multirow{2}{8pt}{62,061} & \multicolumn{1}{c|}{\textbf{66.7}} & \multicolumn{1}{c|}{23.6} & \multicolumn{1}{c|}{55.9} & \multicolumn{1}{c|}{63.0} \\
& & & & \multicolumn{1}{c|}{(0.5)} & \multicolumn{1}{c|}{(1.5)} & \multicolumn{1}{c|}{(2.4)} & \multicolumn{1}{c|}{(1.4)} \\
\hline
\multirow{2}{40pt}{Sector} & \multirow{2}{8pt}{105} & \multirow{2}{22pt}{6412} & \multirow{2}{8pt}{55,197} & \multicolumn{1}{c|}{\textbf{8.83}} & \multicolumn{1}{c|}{2.67} & \multicolumn{1}{c|}{7.43} & \multicolumn{1}{c|}{7.43} \\
& & & & (0.99) & (0.33) & (1.09) & \multicolumn{1}{c|}{(1.14)} \\
\hline
\multirow{2}{40pt}{Vehicle} & \multirow{2}{8pt}{4} & \multirow{2}{22pt}{846}  & \multirow{2}{8pt}{18} & \multicolumn{1}{c|}{\textbf{49.3}} & \multicolumn{1}{c|}{35.9} & \multicolumn{1}{c|}{47.4} & \multicolumn{1}{c|}{\textbf{48.1}} \\
& & & & (1.7) &(2.6) & (1.8) & \multicolumn{1}{c|}{(1.8)} \\
\hline
\multirow{2}{40pt}{Shuttle} & \multirow{2}{8pt}{7} & \multirow{2}{22pt}{43,500}  & \multirow{2}{8pt}{9} & \multicolumn{1}{c|}{\textbf{95.3}} & \multicolumn{1}{c|}{86.6} & \multicolumn{1}{c|}{86.6} & \multicolumn{1}{c|}{87.1} \\
& & & & (0.1) & (3.8) & (0.0) & \multicolumn{1}{c|}{(1.7)} \\
\hline
\multirow{2}{40pt}{USPS} & \multirow{2}{8pt}{10} & \multirow{2}{22pt}{7,291} & \multirow{2}{8pt}{256} & \multicolumn{1}{c|}{\textbf{84.9}} & \multicolumn{1}{c|}{48.5} & \multicolumn{1}{c|}{78.5} & \multicolumn{1}{c|}{81.4} \\ 
& & & & (0.3) & (5.5) & (1.9) & \multicolumn{1}{c|}{(1.5)} \\
\hline
\multirow{2}{40pt}{Pendigits} & \multirow{2}{8pt}{10} & \multirow{2}{22pt}{7,494} & \multirow{2}{8pt}{16} & \multicolumn{1}{c|}{\textbf{79.7}} & \multicolumn{1}{c|}{32.7} & \multicolumn{1}{c|}{60.7} & \multicolumn{1}{c|}{70.7} \\
& & & &  (0.4) & (3.0) & (2.4) & \multicolumn{1}{c|}{(1.5)} \\
\hline
\end{tabular}
}
\end{table*}

\begin{table*}[htb]
\centering
\caption{Average and standard deviation of the ratio of correct \emph{proposed} labels in nonlinear function case in percentage over ten trials. Gaussian kernel with support set of size 700 are used as a kernel. The methods with best $5\%$ t-test results are in boldface.}
\label{table2}
\vspace{0.1in}
\scalebox{1.0}{
\begin{tabular}{|c||l|l|l||c|c|c|c|}
\hline
& Labels & Instances & Features
&\multicolumn{1}{c|}{\hspace{0.1cm}CSPA\hspace{0.2cm}}&\multicolumn{1}{c|}{Banditron}&\multicolumn{1}{c|}{Confidit}&\multicolumn{1}{c|}{\hspace{0.25cm}BPA\,\,\,} \\
\hline
\multirow{2}{50pt}{Satimage} & \multirow{2}{8pt}{6} & \multirow{2}{22pt}{4435} & \multirow{2}{8pt}{36} & \multicolumn{1}{c|}{\textbf{86.2}} & \multicolumn{1}{c|}{69.3} & \multicolumn{1}{c|}{80.2} & \multicolumn{1}{c|}{81.9}\\
& & & & \multicolumn{1}{c|}{(0.3)} & \multicolumn{1}{c|}{(1.1)} & \multicolumn{1}{c|}{(0.3)} & \multicolumn{1}{c|}{(0.8)} \\
\hline
\multirow{2}{50pt}{MNIST} & \multirow{2}{8pt}{10} & \multirow{2}{22pt}{60,000} & \multirow{2}{8pt}{784} & \multicolumn{1}{c|}{\textbf{91.8}} & \multicolumn{1}{c|}{69.0} & \multicolumn{1}{c|}{87.6} & \multicolumn{1}{c|}{89.7} \\
& & & & (0.2) & (1.0) & (0.2) & \multicolumn{1}{c|}{(0.2)} \\
\hline
\multirow{2}{50pt}{Letter} & \multirow{2}{8pt}{26} & \multirow{2}{22pt}{15,000}  & \multirow{2}{8pt}{16} & \multicolumn{1}{c|}{\textbf{62.4}} & \multicolumn{1}{c|}{26.8} & \multicolumn{1}{c|}{36.0} & \multicolumn{1}{c|}{50.7} \\
& & & & (1.6) & (6.3) & (0.9) & \multicolumn{1}{c|}{(0.5)} \\
\hline
\multirow{2}{50pt}{Segment} & \multirow{2}{8pt}{7} & \multirow{2}{22pt}{2310}  & \multirow{2}{8pt}{19} & \multicolumn{1}{c|}{\textbf{90.1}} & \multicolumn{1}{c|}{76.2} & \multicolumn{1}{c|}{74.9} & \multicolumn{1}{c|}{86.7} \\
& & & &(0.6)&(0.8) & (1.7)& \multicolumn{1}{c|}{(0.6)} \\
\hline
\multirow{2}{50pt}{Vowel} & \multirow{2}{8pt}{11} & \multirow{2}{22pt}{528} & \multirow{2}{8pt}{10} & \multicolumn{1}{c|}{\textbf{41.8}} & \multicolumn{1}{c|}{\textbf{43.1}} & \multicolumn{1}{c|}{27.1} & \multicolumn{1}{c|}{38.6} \\ 
& & & & (4.6) &(4.8) &(1.3) & \multicolumn{1}{c|}{(2.7)} \\
\hline
\multirow{2}{50pt}{Sensorless} & \multirow{2}{8pt}{11} & \multirow{2}{22pt}{58509} & \multirow{2}{8pt}{48} & \multicolumn{1}{c|}{\textbf{92.6}} & \multicolumn{1}{c|}{68.9} & \multicolumn{1}{c|}{67.7} & \multicolumn{1}{c|}{87.6} \\
& & & &(0.3) & (1.2) & (0.5) & \multicolumn{1}{c|}{(1.2)} \\
\hline
\end{tabular}
}
\end{table*}

\end{document}